\newtheorem{theorem}{Theorem}
\newtheorem{myprobl}{Problem}
\newtheorem{myexam}{Example}
\newcommand{\GP}{\mathbf{z}}
\newcommand{\tss}[0]{{N}}
\begin{document}
%
\title{Robustness Guarantees for Bayesian Inference with Gaussian Processes 
}
\author{Luca Cardelli, Marta Kwiatkowska, Luca Laurenti, Andrea Patane\\
University of Oxford\\
Microsoft Research Cambridge
}
\maketitle
\begin{abstract}
Bayesian inference and Gaussian processes  are widely used 
in
applications ranging from robotics and control to biological systems. 
Many of these applications are safety-critical and require a 
characterization of the uncertainty associated with the learning model 
and formal guarantees on its predictions. In this paper we define a 
robustness measure for Bayesian inference against input perturbations, 
given by the probability that, for a test point and a compact set in the 
input space containing the test point, the prediction of the learning 
model will remain $\delta-$close for all the points in the set, for 
$\delta>0.$ Such measures can be used to provide formal guarantees for 
the absence of adversarial examples.
By employing the theory of Gaussian processes, we derive tight upper 
bounds on the resulting robustness  
by utilising the Borell-TIS 
inequality, and propose algorithms for their computation.
We evaluate our techniques on two examples, a GP regression problem and 
a fully-connected deep neural network, where we rely on weak convergence 
to GPs to study adversarial examples on the MNIST dataset\footnote{Code is available at https://github.com/andreapatane/checkGP.}.

\end{abstract}

\section{Introduction}

The widespread deployment of 
machine learning models,
coupled with the discovery of their fragility against carefully crafted manipulation of training and/or test samples \cite{biggio2017wild,grosse2017statistical,szegedy2013intriguing}, calls for safe approaches to AI to enable their use in 
safety-critical applications, as argued, e.g., in \cite{seshia2016towards,dreossi2017compositional}. 
Bayesian techniques, in particular, provide a principled way of combining a-priori information into the training process, so as to obtain an a-posteriori distribution on test data, which also takes into account the uncertainty in the learning process. 
Recent advances in Bayesian learning include adversarial attacks 
\cite{grosse2017wrong} and methods to compute pointwise uncertainty estimates in Bayesian deep learning 
\cite{gal2016dropout}.
However, much of the work on formal guarantees for machine learning models 
has focused on non-Bayesian models, such as deep neural networks (NNs) \cite{huang2017safety,hein2017formal} 
and, to the best of our knowledge, 
there is no work directed at providing formal guarantees for the absence of
adversarial local input perturbations in Bayesian prediction settings. 

Gaussian processes (GPs) are a class of stochastic processes that are, due to their many favourable properties, widely employed for Bayesian learning \cite{rasmussen2004gaussian},
with applications spanning robotics, control systems and biological processes \cite{sadigh2015safe,laurenti2017reachability,bortolussi2018central}. 
Further, driven by pioneering work that first recognized the convergence of fully-connected NNs to GPs in the limit of infinitely many neurons \cite{neal2012bayesian}, 
GPs have been used recently as a model to characterize the behaviour of NNs in terms of convergence analysis \cite{matthews2018gaussian}, approximated Bayesian inference \cite{lee2017deep} and training algorithms \cite{chouza2018gradient}.

In this paper we compute formal local robustness guarantees for Bayesian inference with GP priors. The resulting guarantees are probabilistic, as they take into account the uncertainty intrinsic in the Bayesian learning process and explicitly work with the a-posteriori output distribution of the GP.
More specifically, given a GP model trained on a given data set, a test input point and a neighborhood around the latter, we are interested in computing the probability that there exists a point in the neighbourhood such that the prediction of the GP on the latter differs from the initial test input point by at least a given threshold. This 
implicitly gives guarantees on the absence of adversarial examples, that is, input samples that  trick a machine learning model into performing wrong predictions.

Unfortunately, computing such a probability is far from trivial. In fact, 
given a compact set $T\subseteq \mathbb{R}^m,m>0,$ and $x^* \in T,$ the above measure reduces to computing the probability that there exists a function $f$ sampled from the GP such that there exists $x \in T$ for which $||f(x^*)-f(x) ||> \delta$, where $\delta>0$ and $|| \cdot ||$ is a metric norm. Since the set $T$ is composed of an infinite number of points, computing such a measure for general stochastic processes is extremely challenging. 
However, for GPs we can obtain tight upper bounds on the above probability by making use of inequalities developed in the theory of GPs, such as the \emph{Borell-TIS inequality} \cite{adler2009random} and the \emph{Dudley's Entropy Integral} \cite{dudley1967sizes}. 
To do this, we need to obtain lower and upper bounds on the extrema of the a-posteriori GP mean and variance functions on neighborhoods of a given test point.
We obtain these bounds by constructing lower and upper approximations for the GP kernel as a function of the test point, which are then propagated through the GP inference formulas. Then, safe approximations for these values are obtained  by posing a series of optimization problems that can be solved either analytically or by standard {quadratic} convex optimization techniques.
We illustrate the above framework with explicit algorithmic techniques for GPs built with squared-exponential and ReLU kernel.

Finally, we apply the methods presented here to characterize the robustness of GPs with ReLU kernel trained on a subset of images included in the MNIST dataset.
Relying on the weak convergence between fully-connected NNs with ReLU activation functions and the corresponding GPs with ReLU kernel, we analyze the behaviour of 
such networks on adversarial images in a Bayesian setting.
We use SIFT \cite{lowe2004distinctive} to focus on important patches of the image, and perform feature-level safety analyses of test points included in the dataset.
We apply the proposed methods to evaluate the resilience of features against (generic) adversarial perturbations bounded in norm, and discuss how this is affected by stronger perturbations and different misclassification thresholds.
We perform a parametric optimization analysis of maximum prediction variance around specific test points in an effort to characterize active defenses against adversarial examples that rely on variance thresholding. 
In the examples we studied, we have consistently observed that, while an increased number of training samples may significantly help detect adversarial examples by means of prediction uncertainty, the process may be undermined by more complex architectures.  

In summary, the paper makes the following main contributions:
\begin{itemize}
    \item We provide tight upper bounds on the probability that the prediction of a Gaussian process remains close to a given test point in a neighbourhood, which can be used to quantify local robustness against adversarial examples. 
    \item 
    We develop algorithmic methods for the computation of extrema of GP mean and variance over a compact set.
    \item Relying on convergence between fully-connected NNs and GPs, we apply the developed methods to provide feature-level analysis of the behaviour of the former on 
    the MNIST dataset.
\end{itemize}

\subsubsection{Why probabilistic local robustness guarantees?}
Our results provide formal local robustness guarantees in the sense that the resulting bounds are sound with respect to a neighbourhood of an input, and 
numerical methods 
have not been used. 
This enables certification for Bayesian methods that is necessary in safety-critical applications, 
and is in contrast with many existing pointwise approaches to detect adversarial examples in Bayesian models, which are generally based on heuristics, such as to reject test points with high uncertainty \cite{li2017dropout,feinman2017detecting}.
We illustrate the intuition with the following simple example.

\begin{myexam}\label{IntroExample}
Let $(\GP(x),x \in \mathbb{N})$ be a zero-mean stochastic process with values in $\mathbb{R}$. Consider the following widely used definition of safety for a set $T=[x_{1},...,x_{10}]$
\begin{align}
\label{SafetyDefinitionIntroExample}
    P_{safe}(\GP,T,\delta)=\text{Prob}( \forall x \in T,\,   \GP(x) < \delta ), 
\end{align} 
where $\delta\in \mathbb{R}$ is a given threshold.
Assume that, for all $x_i,x_j \in T$, $\GP(x_i)$ and $\GP(x_j)$ are independently and equally distributed random variables such that for each $x \in T$ we have 
$ \text{Prob}( \GP(x) < \delta )=0.85. $
Then, if we compute the above property we obtain
$$  P_{safe}(\GP,T,\delta)=0.85^{10}\approx 0.197 .$$
Thus, even though at each point $\GP(x)$ has relatively high probability of being safe, $P_{safe}(\GP,T,\delta)$ is still small. This is because safety, as defined in Eqn \ref{SafetyDefinitionIntroExample}, depends on a set of points, and this must be accounted for to give robustness guarantees for a given stochastic model. Note that, to simplify, we used a discrtete set $T$, but the same reasoning remains valid even if $T\subseteq \mathbb{R}^m,m>0$, as in this paper.
\end{myexam}

\subsection{Related Work}
Existing formal approaches for machine learning models mostly focus on computing non-probabilistic local safety guarantees \cite{raghunathan2018certified,huang2017safety,ruan2018reachability} and generally neglect the uncertainty {of} the learning process, which 
is intrinsic in a Bayesian model.
Recently, empirical methods to detect adversarial examples for Bayesian NNs that utilise pointwise uncertainty have been introduced \cite{li2017dropout,feinman2017detecting}.
However, these approaches can be fooled by attacks that generate adversarial examples with small uncertainty as shown in \cite{carlini2017adversarial}. 
Unfortunately, obtaining formal guarantees for Bayesian NNs is challenging since their posterior distribution, which can be obtained in closed form for GPs, is generally analytically intractable \cite{gal2016dropout}.  
In \cite{grosse2017wrong} attacks for Bayesian inference with Gaussian processes based on local perturbations of the mean have been presented. 

Notions of safety for Gaussian processes have been recently studied in the context of system design for stochastic models  (see, e.g. \cite{wachi2018safe,bartocci2015system,sadigh2015safe,sui2015safe}). In \cite{sadigh2015safe}, the authors synthesize safe controllers against \emph{Probabilistic Signal Temporal Logic (PrSTL)} specifications, which suffer from the 
issue illustrated in Example \ref{IntroExample}.  
Another related approach is that in  \cite{sui2015safe}, where the authors build on 
\cite{srinivas2010gaussian} and introduce SAFEOPT, a Bayesian optimization algorithm that additionally guarantees that, for the optimized parameters, with high probability the resulting objective function (sampled from a GP) is greater than a threshold. However, they do not give guarantees against perturbation of the synthesized parameters. {For instance, their method cannot guarantee that the resulting behaviour will still be safe and close to the optimal value if parameters corrupted by noise are applied.} 
Our approach 
allows one to quantify such a probability.
 We should also stress that, while it is often the case that the guarantees provided by existing algorithms are statistical (i.e., given in terms of confidence intervals), the bounds presented in this paper are probabilistic.



 \section{Problem Formulation}


We consider a Gaussian process $\big(\GP(x),x \in \mathbb{R}^m,m>0\big) $ with values in $\mathbb{R}^n,n>0$ and  with a Gaussian probability measure $P$ such that, for any $x_1,x_2,...,x_k \in \mathbb{R}^m$, $P(\GP(x_1),\GP(x_2),...,\GP(x_k))$ is a multivariate normal distribution\footnote{In this paper we assume $\GP$ is a separable stochastic process. This is a standard and common assumption  \cite{adler2009random}. The separability of $\GP$ guarantees that Problem \ref{BoundedVariationSingleCOmponentDefiniton} and \ref{BoundedVariationDefinition} are measurable.}. We consider Bayesian inference for $\GP$. That is, as illustrated in detail in the next section, given a dataset $\mathcal{D}=\{\GP(x_i)=y_i, i \in \{1,...,\tss \} \}$ {of $\vert \mathcal{D} \vert :=\tss$} samples,  we consider the process
$$ \GP(x) | \mathcal{D}, \, x \in \mathbb{R}^m,   $$
which represents the conditional distribution of $\GP$ given the set of observations in $\mathcal{D}$.
The first problem we examine is Problem \ref{BoundedVariationSingleCOmponentDefiniton}, where we want to compute the probability that local perturbations of a given test point result
in predictions that remain close to the original. 
\begin{myprobl}\label{BoundedVariationSingleCOmponentDefiniton}{(Probabilistic Safety).}
Consider the training dataset $\mathcal{D}$. Let $T\subseteq \mathbb{R}^m$ and fix $x^* \in T.$  For $\delta> 0$
call 
\begin{align*}
    \phi_1^{i} (x^*,&T,\delta\, | \, \mathcal{D})=\\
     &P( \exists x' \in T\, s.t.\, \big( \GP^{(i)}(x^*)-\GP^{(i)}(x') \big ) > \delta \, | \, \mathcal{D}),
\end{align*}
   where $\GP^{(i)}$ is the i-th component of $\GP.$ 
Then we say that component $i$ in $\GP$ is safe with probability $1-\epsilon >0$ for $x^*$ with respect to set $T$ and perturbation $\delta> 0$ iff
\begin{align}
   \phi_1^{i} (x^*&,T,\delta |  \mathcal{D}) \leq \epsilon .
   \label{def:SAFETYGPSingle}
\end{align}
\end{myprobl}
\noindent
Intuitively,  we consider a test point $x^*$ and a compact set $T$ containing $x^*$, and compute the probability that the predictions of $\GP$ remain $\delta-$close for each $x' \in T$.
We consider the components of the GP individually and with sign, enabling one-sided analysis. 
Note that $T$ is composed of an uncountable number of points, making the probability computation challenging.
{Moreover, Problem \ref{BoundedVariationSingleCOmponentDefiniton} will still represent a sound notion of safety even in the case that a distribution on the input space can be assumed.}
Problem \ref{BoundedVariationSingleCOmponentDefiniton} can be generalized to  local invariance of $\GP$ with respect to a given metric (Problem \ref{BoundedVariationDefinition} below). In the next section, for the corresponding solution, 
we will work with the $L_1$ norm, but all the results can be easily extended to any $L_p$ norm, including $L_{\infty}$. 

\begin{myprobl}{(Probabilistic Invariance)}\label{BoundedVariationDefinition}
Consider the training dataset $\mathcal{D}$. 
Let $T\subseteq \mathbb{R}^m$ and assume $x^* \in T.$
For metric $|| \cdot ||_d: \mathbb{R}^n \to \mathbb{R}_{\geq 0}$ and $\delta >0$ call
\begin{align*}
    \phi_2(x^*,&T,\delta | \mathcal{D})=P( \exists x' \in T\, s.t.\,  || \GP(x')-\GP(x^*)) ||_d > \delta  | \mathcal{D} )
\end{align*}
Then we say that $\GP$  is $\delta-$invariant with respect to metric $|| \cdot ||_d$  for $x^*$ in $T$ and perturbation $\delta> 0$ with probability $1-\epsilon>0$ iff
\begin{align}
    \label{def:SAFETYGPextended}
    \phi_2(x^*,&T,\delta | \mathcal{D}) \leq  \epsilon.
\end{align}
\end{myprobl}
Probabilistic invariance, as defined in Problem \ref{BoundedVariationDefinition}, bounds the probability that each function sampled from $\GP$ remains within a distance of at most $\delta$ to the initial point. 
Note that both Problem \ref{BoundedVariationSingleCOmponentDefiniton} and  \ref{BoundedVariationDefinition} quantify the probability of how the output of a learning process changes its value in a set around a given test input point, which implicitly gives  probabilistic guarantees for local robustness against adversarial examples.
In the next section, in Theorem \ref{BoundsSIngleBoundedVariation} and \ref{Theorem-Invariance}, we give analytic upper bounds for Problem \ref{BoundedVariationSingleCOmponentDefiniton} and \ref{BoundedVariationDefinition}.
In fact, analytic distributions of the supremum of a GP, which would allow one to solve the above problems, are known only for a very limited class of GPs (and always for GPs evolving over time) \cite{adler2009random}, making exact computation impossible. 
However, first, we illustrate the intuition behind the problems studied here on a GP regression problem. 

\begin{myexam}\label{ex:run_ex1}
We consider a regression problem taken from  \cite{bach2009exploring},
where we generate 128 samples from a random two-dimensional covariance matrix, and define labels as a (noisy) quadratic polynomial of the two input variables.
We train a GP with squared-exponential kernel on this dataset, using a maximum likelihood estimation of the kernel hyper-parameters \cite{rasmussen2004gaussian}. 
The mean and variance of the GP obtained after training are plotted in Figure \ref{fig:example_mean_var}, along with the samples used for training.
\begin{figure}
\centering
\subfloat[Mean.]{\includegraphics[width = .49\columnwidth]{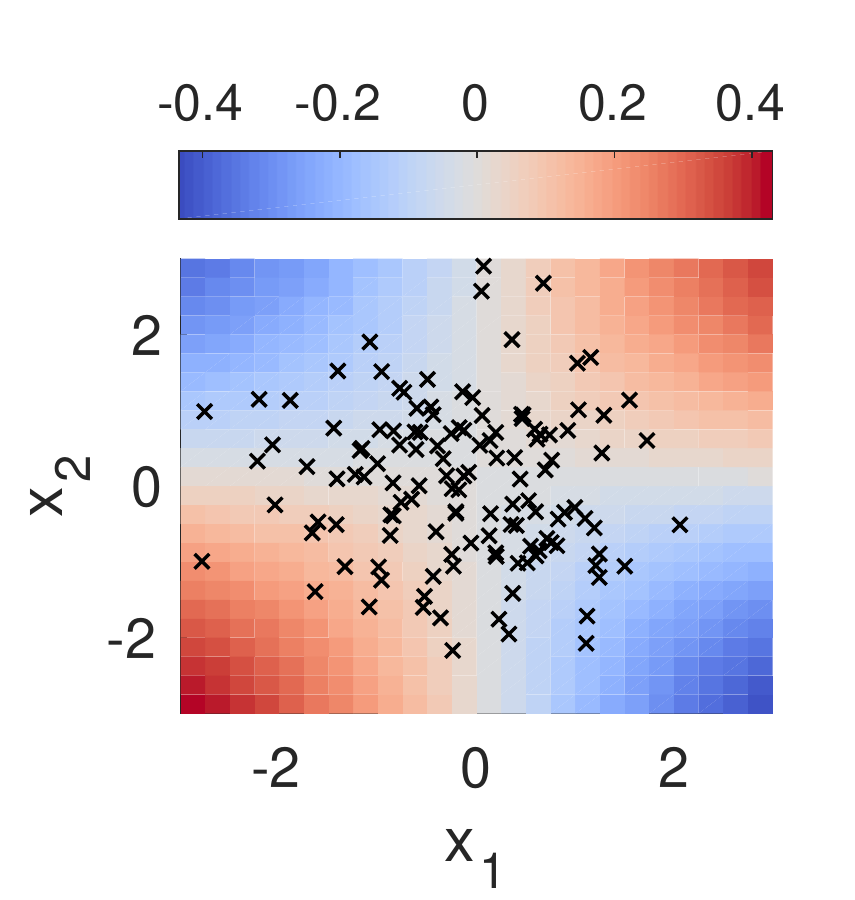}} \hfill
\subfloat[Variance.]{\includegraphics[ width = .49\columnwidth]{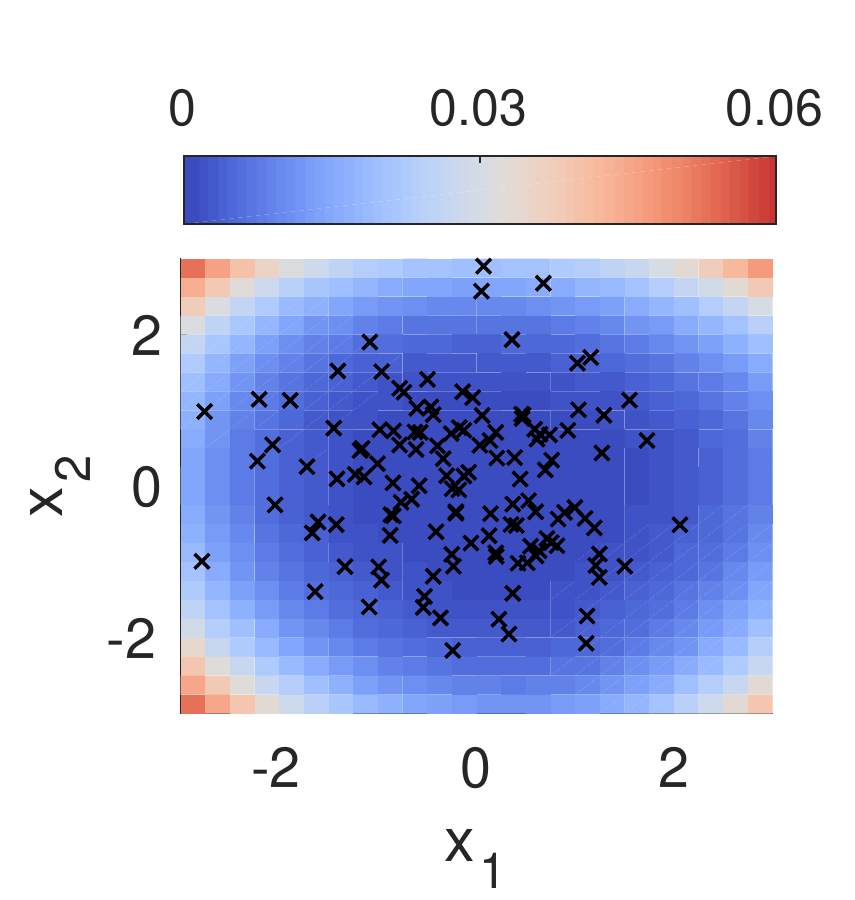}}
\caption{Results of GP training.}\label{fig:example_mean_var}
\end{figure}
Consider the origin point $x^o = \left( 0 , 0 \right)$, let $\gamma = (0.1,0.1)$  and define $T^o_\gamma = [x^o - \gamma, x^o + \gamma ]$. As $x^o$ is a saddle point for the mean function, variations of the mean around it are relatively small.
Analogously, the variance function exhibits a flat behaviour around $x^o$, meaning greater confidence of the GP in performing predictions around $x^o$. 
As such we expect realizations of the GP to be consistently stable in a neighbourhood of $x^o$, which in turn translates to low values for  $\phi_1 (x^o,T^o_\gamma,\delta )$ and $\phi_2 (x^o,T^o_\gamma,\delta),$ where in $\phi_1$ and $\phi_2$, to simplify the notation, we omit the dataset used for training.
On the other hand, around $x^* = (3,3)$ the a-posteriori mean changes quickly and the variance is high, reflecting higher uncertainty. 
Hence, letting $T^*_\gamma = [x^* - \gamma, x^* + \gamma ]$, we expect the values of $\phi_1 (x^*,T^*_\gamma,\delta)$ and $\phi_2 (x^*,T^*_\gamma,\delta)$
to be greater than those computed for $x^o$.

In the next section we show how $\phi_1 (x,T_\gamma,\delta)$ and $\phi_2 (x,T_\gamma,\delta)$ can be computed to quantify the uncertainty and variability of the predictions around $x^o$ and $x^*.$


\end{myexam}

\section{Theoretical Results}
Since $\GP$ is a Gaussian process, its distribution is completely defined by its mean $\mu: \mathbb{R}^m \to \mathbb{R}^n$ and covariance (or kernel) function $\Sigma: \mathbb{R}^m \times \mathbb{R}^m \to \mathbb{R}^{n\times n}.$  
Consider a set of training data $\mathcal{D}=\{\GP(x_i)=y_i, i \in \{1,...,\tss \} \},$ and call $\mathbf{y}=[y_1,...,y_{\tss}].$ 
Training $\GP$ in a Bayesian framework is equivalent to computing the distribution of ${\GP}$ given the dataset $\mathcal{D}$, that is, the distribution of the process
$$ \bar{\GP}=\GP \, |\,\mathcal{D}.  $$
Given a test point $x^*\in \mathbb{R}^m$ and $x_1,...,x_{\tss}$ training inputs in $\mathcal{D}$, consider the joint distribution
$[\GP(x^*),\GP(x_1),...,\GP(x_{\tss})]$, which is still Gaussian  with mean and covariance matrix given by
$$ \mu=\begin{bmatrix}
       \mu(x^*) ,
       \mu(x_1) ,...,   \mu(x_{\tss})
     \end{bmatrix}
     \quad    \Sigma=\begin{bmatrix}
       \Sigma_{x^*,x^*} & \Sigma_{x^*,\mathcal{D}}           \\[0.3em]
       \Sigma_{x^*,\mathcal{D}}^T & \Sigma_{\mathcal{D},\mathcal{D}}
     \end{bmatrix}, $$ 
     where $\Sigma_{D,D}$ is the covariance matrix relative to vector $[\GP(x_1),...,\GP(x_{|\mathcal{D}|})].$
     Then, it follows that $\bar{\GP}$ is still Gaussian with mean and covariance matrix defined as follows:
\begin{align}
   \label{Eq:COnditionMean}& \bar{\mu}(x^*)= \mu(x^*) + \Sigma_{x^*,\mathcal{D}}\Sigma_{\mathcal{D},\mathcal{D}}^{-1}(\mathbf{y}-\mu_{\mathcal{D}})\\
   &\bar{\Sigma}_{x^*,x^*}= \Sigma_{x^*,x^*}-\Sigma_{x^*,\mathcal{D}}\Sigma_{\mathcal{D},\mathcal{D}}^{-1}\Sigma_{x^*,\mathcal{D}}^T,\label{Eq:ConditionalVariance}
\end{align} 
 where $\mu_{\mathcal{D}}=[\mu(x_1),...,\mu(x_{\tss})].$
Hence, for GPs the distribution of $\bar{\GP}(x^*)$ can be computed exactly. 

Given two test points $x^*_1,{x^*_2}$ and $x^*=[x^*_1,{x^*_2}]$, the above calculations can still be applied to compute the joint distribution
$$\bar{\GP}(x^*)=\big([\GP(x^*_1),\GP(x^*_2)]\, |\,\mathcal{D}\big) . $$ 
In particular, $\bar{\GP}(x^*)$ is still Gaussian and with mean $\bar{\mu}$ and covariance matrix $\bar{\Sigma}$ given by Eqns \eqref{Eq:COnditionMean} and \eqref{Eq:ConditionalVariance} 
but with $\mu(x^*)=[\mu(x^*_1),\mu(x^*_2)]$ and
$    \Sigma_{x^*,x^*}=\begin{bmatrix}
       \Sigma_{x^*_1,x^*_1} & \Sigma_{x^*_1,x^*_2}           \\[0.3em]
      \Sigma_{x^*_1,x^*_2}^T & \Sigma_{x^*_2,x^*_2}
     \end{bmatrix}. $
     From $\bar{\GP}(x^*)$ we can obtain  the distribution of the following random variable
    $$\GP^o(x^*_1,x^*_2)=\big( \GP(x^*_1)-\GP(x^*_2)\big)\,  | \, \mathcal{D},$$
which represents the difference of $\GP$ at two distinct test points after training. It is straightforward to show that,  given $B\in \mathbb{R}^{n\times 2 n}$ such that $B=[I;-I],$ where $I$ is the identity matrix of dimension $n,$
     $\GP^o(x^*_1,x^*_2)$ is Gaussian with mean and variance
     $$\mu^{o}(x^*_1,x^*_2)=B \bar{\mu}(x^*) \quad \quad    \Sigma^o_{x^*_1,x^*_2}=B \bar{\Sigma}_{x^*,x^*} B^T. $$
$\GP^o(x^*_1,x^*_2)$ is the distribution of how $\GP,$ after training, changes with respect to two different test  points.
However, to solve Problem \ref{BoundedVariationSingleCOmponentDefiniton} and \ref{BoundedVariationDefinition}, we need to take into account all the test points in $T\subseteq \mathbb{R}^m$ and compute the probability that in at least one of them $\GP^o$ exits from a given set of the output space. 
This is done in Theorem \ref{BoundsSIngleBoundedVariation} by making use of the Borell-TIS inequality and of the Dudley's entropy integral \cite{adler2009random,dudley1967sizes}.
The above inequalities allow one to study Gaussian processes by appropriately defining a metric on the variance of the GPs.
In order to define such a metric we call $\hat \GP^o$ the GP with the same covariance matrix as $\GP^o$ but with zero mean and $\hat \GP^{o,(i)}$ its $i$-th component.
For $i \in \{1,...,n\}$, a test point $ x^* \in \mathbb{R}^{m}, $ and $x_1,x_2 \in \mathbb{R}^m$  we define the (pseudo-)metric $d^{(i)}_{x^*}(x_1,x_2)$ by
\begin{align}\label{Eqn:dNormdefinition}
   d^{(i)}_{x^*}(x_1,x_2)=&\sqrt{ \mathbb{E}[(\hat \GP^{o,(i)}(x^*,x_1)- \hat \GP^{o,(i)}(x^*,x_2))^2]  }\\
   =&\sqrt{\mathbb{E}[(\hat \GP^{(i)}(x_2)- \hat \GP^{(i)}(x_1))^2]}, \nonumber  
\end{align} 
where $\hat \GP^{(i)}$ is the $i$-th component of the zero-mean version of $\bar{\GP}$. Note that $d^{(i)}_{x^*}(x_1,x_2)$ does not depend on $x^*.$
Additionally, we assume there exists a constant $K_{x^*}^{(i)}>0$ such that for a compact $T\subseteq \mathbb{R}^m$ and $x^*,x_1,x_2 \in T$\footnote{Note that here we work with the $L_2$ norm, but any other $L_p$ metric would work.} 
$$ d^{(i)}_{x^*}(x_1,x_2)\leq K^{(i)}_{x^*} \vert \vert x_1-x_2 \vert \vert_2. $$
Now, we are finally ready to state the following theorem.
\begin{theorem}\label{BoundsSIngleBoundedVariation}
Assume $T\subseteq \mathbb{R}^m,m>0,$ is a hyper-cube with layers of length $D>0$.
 For $ {x^*} \in T, \delta >0,$ and $i \in \{1,\ldots,n \}$ let
    \begin{align*}
        &\eta_i = \delta -\big( \, sup_{x \in T}\mu^{o,(i)}({x^*}, x)\,   + \\
        &12 \int_{0}^{\frac{1}{2}sup_{x_1,x_2 \in T} d^{(i)}_{{x^*}}(x_1,x_2)}\sqrt{ ln \left(\big( \frac{\sqrt{m} K^{(i)}_{{x^*}} D\, }{ z}+ 1\big)^m  \right) } d z\big).
    \end{align*}
   Assume $\eta_i > 0$. Then, it holds that
    \begin{equation*}
        \phi_1^{i} (x^*,T,\delta |  \mathcal{D}) \leq \hat{\phi}_1^{i} (x^*,T,\delta |  \mathcal{D}) :=  e^{ -\frac{\eta_i^2}{2 \xi^{(i)} }},
    \end{equation*}
    where $\xi^{(i)} = \sup_{x\in  T} \Sigma^{o,(i,i)}_{{x^*},x}$ is the supremum of the component $(i,i)$ of the covariance matrix $\Sigma^{o}_{{x^*},x}$.
\end{theorem}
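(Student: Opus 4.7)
The plan is to reduce the event in $\phi_1^{i}$ to a supremum‐type tail bound on a zero–mean Gaussian process indexed by $T$, and then combine two classical ingredients already cited in the excerpt: the Borell–TIS concentration inequality and Dudley's entropy integral.

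First I would rewrite
\[
\phi_1^{i}(x^*,T,\delta\mid\mathcal D)=P\!\left(\sup_{x\in T}\GP^{o,(i)}(x^*,x)>\delta\,\Big|\,\mathcal D\right),
\]
which follows directly from the definition of $\GP^o$ and Problem~\ref{BoundedVariationSingleCOmponentDefiniton}. Splitting into mean and centred parts, for any fixed $x^*$ the process $x\mapsto \GP^{o,(i)}(x^*,x)$ equals $\mu^{o,(i)}(x^*,x)+\hat\GP^{o,(i)}(x^*,x)$, so using $\sup(a+b)\leq \sup a+\sup b$ gives
\[
\sup_{x\in T}\GP^{o,(i)}(x^*,x)\ \leq\ \sup_{x\in T}\mu^{o,(i)}(x^*,x)+\sup_{x\in T}\hat\GP^{o,(i)}(x^*,x).
\]
Denote $M_i=\sup_{x\in T}\mu^{o,(i)}(x^*,x)$ and $\tilde S_i=\sup_{x\in T}\hat\GP^{o,(i)}(x^*,x)$. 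The task thus reduces to upper bounding $P(\tilde S_i>\delta-M_i\mid\mathcal D)$.

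Second, I would apply the Borell--TIS inequality to the centred separable Gaussian field $\{\hat\GP^{o,(i)}(x^*,x):x\in T\}$. Its pointwise variance is $\Sigma^{o,(i,i)}_{x^*,x}$, whose supremum on $T$ is exactly $\xi^{(i)}$, so Borell--TIS yields, for every $u>0$,
\[
P\!\bigl(\tilde S_i>\mathbb E[\tilde S_i]+u\mid\mathcal D\bigr)\ \leq\ \exp\!\bigl(-u^{2}/(2\xi^{(i)})\bigr).
\]
Setting $u=\delta-M_i-\mathbb E[\tilde S_i]$ (which is positive once we show $\eta_i>0$ implies $u\geq \eta_i$) reduces everything to producing a sufficiently explicit upper bound on $\mathbb E[\tilde S_i]$.

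Third, for that bound I would invoke Dudley's entropy integral applied to $\hat\GP^{o,(i)}(x^*,\cdot)$ under the canonical pseudometric $d^{(i)}_{x^*}$ defined in \eqref{Eqn:dNormdefinition}, obtaining
\[
\mathbb E[\tilde S_i]\ \leq\ 12\int_{0}^{\frac12 \operatorname{diam}_{d^{(i)}_{x^*}}(T)}\!\!\sqrt{\ln N(T,d^{(i)}_{x^*},z)}\,dz,
\]
where $N(T,d^{(i)}_{x^*},z)$ is the $z$-covering number of $T$. Using the Lipschitz assumption $d^{(i)}_{x^*}(x_1,x_2)\leq K^{(i)}_{x^*}\|x_1-x_2\|_2$, any Euclidean ball of radius $z/K^{(i)}_{x^*}$ is contained in a $d^{(i)}_{x^*}$-ball of radius $z$; covering the hyper-cube $T$ of side $D$ by such Euclidean balls (partition into axis-aligned sub-cubes of diameter $\leq 2z/K^{(i)}_{x^*}$) gives
\[
N(T,d^{(i)}_{x^*},z)\ \leq\ \Bigl(\tfrac{\sqrt m\,K^{(i)}_{x^*}D}{z}+1\Bigr)^{m},
\]
which is exactly the quantity under the logarithm in the definition of $\eta_i$. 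Substituting this into Dudley's bound and recognising that $\mathbb E[\tilde S_i]\leq \delta-M_i-\eta_i$ yields $u\geq \eta_i>0$, and Borell--TIS then produces $\hat\phi_1^{i}=e^{-\eta_i^{2}/(2\xi^{(i)})}$.

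The main obstacle I expect is the book-keeping around Dudley's integral: matching the constant $12$ and the upper integration limit to a standard statement (different references use different constants and a factor of~$2$ in the covering exponent), and verifying that the Lipschitz covering argument really gives the stated form $(\sqrt m K^{(i)}_{x^*}D/z+1)^m$ rather than one with extra constants. Everything else is a routine chain: separability of $\GP$ (assumed in the footnote) is what makes both Borell--TIS and Dudley applicable, the mean-variance decomposition is exact, and the fact that the metric $d^{(i)}_{x^*}$ is independent of $x^*$ is already observed in the excerpt and ensures that the covering-number bound is well defined and used consistently.
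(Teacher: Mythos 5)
Your proposal follows essentially the same route as the paper's proof: rewrite $\phi_1^{i}$ as a supremum tail probability, split $\GP^{o,(i)}$ into its mean and the centred process $\hat\GP^{o,(i)}$, bound the tail with the Borell--TIS inequality using $\xi^{(i)}=\sup_{x\in T}\Sigma^{o,(i,i)}_{x^*,x}$, and control $\mathbb{E}[\sup_{x\in T}\hat\GP^{o,(i)}(x^*,x)]$ via Dudley's entropy integral together with the Lipschitz covering-number bound $N(T,d^{(i)}_{x^*},z)\leq\bigl(\sqrt{m}\,K^{(i)}_{x^*}D/z+1\bigr)^{m}$. The constant/diameter book-keeping you flag (factor $12$, the $\tfrac12$ upper limit, radius versus diameter in the covering argument) is exactly the same accounting the paper performs, so the argument is correct and matches the paper's.
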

\begin{proof}{[Sketch.]}
{
     \begin{align*}
     & \phi_1^{i} (x^*,T,\delta |  \mathcal{D})\\
     &\quad \quad \text{(By definition of $\phi_1$)}\\
         =& P( \exists x \in T\, s.t.\, \big( \GP^{(i)}(x)-\GP^{(i)}(x^*)>\delta \, | \, \mathcal{D} \big)\\ & \quad \quad \text{(By definition of supremum)}\\
         =&P\big( \sup_{x \in T}\,  \GP^{o,(i)}(x^*, x)>\delta \big)\\ & \quad \quad \text{(By linearity of GPs)}\\
         =&P\big( \sup_{x \in T}\,\hat \GP^{o,(i)}(x^*, x) + \mathbb{E}[\GP^{o,(i)}(x^*, x)]>\delta \big)\\ & \quad \quad \text{(By definition of supremum)}\\
         \leq &P\big( \sup_{x \in T}\, \hat \GP^{o,(i)}(x^*, x) >\delta- sup_{x_1 \in T}\mathbb{E}[\GP^{o,(i)}( x^*,x_1)] \big).
     \end{align*}
     where $\hat \GP^{o,(i)}(x^*, x)$ is the zero mean Gaussian process with same variance of $\GP^{o,(i)}(x^*, x).$ 
The last inequality can be bounded from above using the Borell-TIS inequality \cite{adler2009random}.
 To use such an inequality we need to derive an upper bound of $\mathbb{E}[sup_{t \in T} \GP^{o,(i)}(x^*,x)]$. This can be done by employing the Dudley's Entropy integral \cite{dudley1967sizes}. 
 
The extended version of the proof can be found in the Appendix.
}
\end{proof}
\noindent
 In Theorem \ref{BoundsSIngleBoundedVariation} {we derive $\hat{\phi}_1^{i} (x^*,T,\delta |  \mathcal{D})$ as an} upper bound for $\phi_1^{i} (x^*,T,\delta |  \mathcal{D})$. Considering that $\GP$ is a Gaussian process, it is interesting to note that the resulting bounds still follow an exponential distribution.
From Theorem \ref{BoundsSIngleBoundedVariation} we have the following result.
\begin{theorem}\label{Theorem-Invariance}
Assume $T\subseteq \mathbb{R}^m,m>0$ is a hyper-cube with layers of length $D>0$.
 For $ {x^*} \in T, \delta >0$ let
    \begin{align*}
        &\bar \eta_i = \frac{\delta\, -\, sup_{x \in T}|\mu^{o}({x^*}, x)|_1}{n}\,   - \\
        & 12 \int_{0}^{\frac{1}{2}sup_{x_1,x_2 \in T} d^{(i)}_{{x^*}}(x_1,x_2)} \sqrt{ln \left(\big( \frac{\sqrt{m} K^{(i)}_{{x^*}} D\, }{ z}+ 1\big)^m  \right)}d z.
    \end{align*}
   For each $i \in \{1,...,n\}$ assume $\bar \eta_i > 0$. Then, it holds that
    \begin{equation*}
        \phi_2 (x^*,T,\delta |  \mathcal{D}) \leq \hat{\phi}_2 (x^*,T,\delta |  \mathcal{D}) := 2 \sum_{i=1}^{n} e^{ -\frac{\bar \eta_i^2}{2 \xi^{(i)} }},
    \end{equation*}
    where $\xi^{(i)} = \sup_{x\in  T} \Sigma^{o,(i,i)}_{{x^*},x}$.
\end{theorem}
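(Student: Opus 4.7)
}
The plan is to reduce the $L_1$-invariance event to a collection of one-sided events on individual centered components of $\GP^o$, and then invoke exactly the same Borell-TIS plus Dudley machinery used in the proof of Theorem \ref{BoundsSIngleBoundedVariation}. First I would write $\|\GP(x') - \GP(x^*)\|_1 = \sum_{i=1}^n |\GP^{(i)}(x') - \GP^{(i)}(x^*)|$ and subtract the posterior mean of each component, passing to the centered process $\hat{\GP}^{o,(i)}$. By the triangle inequality, at any $x' \in T$ where the $L_1$-norm exceeds $\delta$,
\begin{align*}
\sum_{i=1}^n |\hat{\GP}^{o,(i)}(x^*, x')| &\geq \sum_{i=1}^n |\GP^{(i)}(x') - \GP^{(i)}(x^*)| - |\mu^o(x^*, x')|_1 \\
&> \delta - \sup_{x \in T} |\mu^o(x^*, x)|_1.
\end{align*}
Pigeonholing the $n$ nonnegative summands then produces an index $i$ such that $|\hat{\GP}^{o,(i)}(x^*, x')| > (\delta - \sup_{x \in T} |\mu^o(x^*, x)|_1)/n =: c$.

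Next I would apply a union bound over the $n$ components and split each absolute-value event into its two one-sided parts, obtaining
\begin{align*}
\phi_2(x^*, T, \delta\,|\,\mathcal{D}) \leq \sum_{i=1}^n \Big[\, P\!\Big(\sup_{x \in T} \hat{\GP}^{o,(i)}(x^*, x) > c\Big) + P\!\Big(\sup_{x \in T} \bigl(-\hat{\GP}^{o,(i)}(x^*, x)\bigr) > c\Big) \,\Big].
\end{align*}
Since $\hat{\GP}^{o,(i)}$ is a centered Gaussian process, $-\hat{\GP}^{o,(i)}$ has the same law and the same intrinsic metric $d^{(i)}_{x^*}$, so the two summands in each bracket admit identical bounds. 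To each I would apply the Borell-TIS inequality, controlling the expected supremum via Dudley's entropy integral and the covering-number estimate $N(z; T, d^{(i)}_{x^*}) \leq (\sqrt{m}\,K^{(i)}_{x^*} D / z + 1)^m$ induced by the Lipschitz assumption on $d^{(i)}_{x^*}$. The effective Borell-TIS threshold is then $c$ minus this Dudley integral, which is exactly $\bar{\eta}_i$, so each one-sided probability is at most $\exp(-\bar{\eta}_i^2/(2\xi^{(i)}))$. Summing the $2n$ terms yields the claimed bound $2 \sum_{i=1}^n \exp(-\bar{\eta}_i^2/(2\xi^{(i)}))$.

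The main obstacle, as well as the only conceptually new step beyond Theorem \ref{BoundsSIngleBoundedVariation}, is the ordering in the first paragraph: one must center \emph{before} invoking pigeonhole, globally absorbing the entire vector-valued mean contribution into $\sup_{x \in T} |\mu^o(x^*, x)|_1$ over $T$ and only then distributing the remaining slack evenly across the $n$ components. This is precisely what produces the $1/n$ factor together with the $|\mu^o|_1$ expression in $\bar{\eta}_i$; a naive per-component pigeonhole performed before centering would only give a bound in terms of per-component suprema $\sup_{x \in T} |\mu^{o,(i)}(x^*, x)|$, which is in general looser than the statement claims. Once the ordering is handled, the remainder of the argument is a routine componentwise reuse of the proof of Theorem \ref{BoundsSIngleBoundedVariation}.
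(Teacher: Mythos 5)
Your proposal is correct and follows essentially the same route as the paper's own proof: center the process first (absorbing the mean via $\sup_{x\in T}|\mu^o(x^*,x)|_1$), pigeonhole the $n$ nonnegative summands to get the $1/n$ factor, take a union bound with the Gaussian symmetry splitting of each absolute value into two one-sided events, and finish each term with Borell-TIS plus Dudley's entropy integral exactly as in Theorem \ref{BoundsSIngleBoundedVariation}. No substantive differences to report.
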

\noindent
Note that in Theorem \ref{BoundsSIngleBoundedVariation} and \ref{Theorem-Invariance} we assume that $T$ is a hyper-cube. However, proofs of both theorems (reported in the Supplementary Materials) can be easily extended to more general compact sets, at a cost of more complex analytic expressions or less tight bounds.

Both theorems require the computation of a set of constants, which depends on the particular kernel. 
In particular, $\xi^{(i)}$ and $ \sup_{x \in T}\mu^{o,(i)}(x^*,x)$ are upper bound of variance and mean a-posteriori while, for a test point $x^*,$ $K^{(i)}_{x^*} $ and $\sup_{x_1,x_2 \in T} d^{(i)}_{x^*}(x_1,x_2)$ represent local Lipschitz constant and upper bound for $d^{(i)}_{x^*}$ in $T.$ 
In the next section, we show how these constants can be computed. 

\begin{myexam}
   \begin{figure*}[h]
    \centering
    {\includegraphics[width = 0.98 \columnwidth]{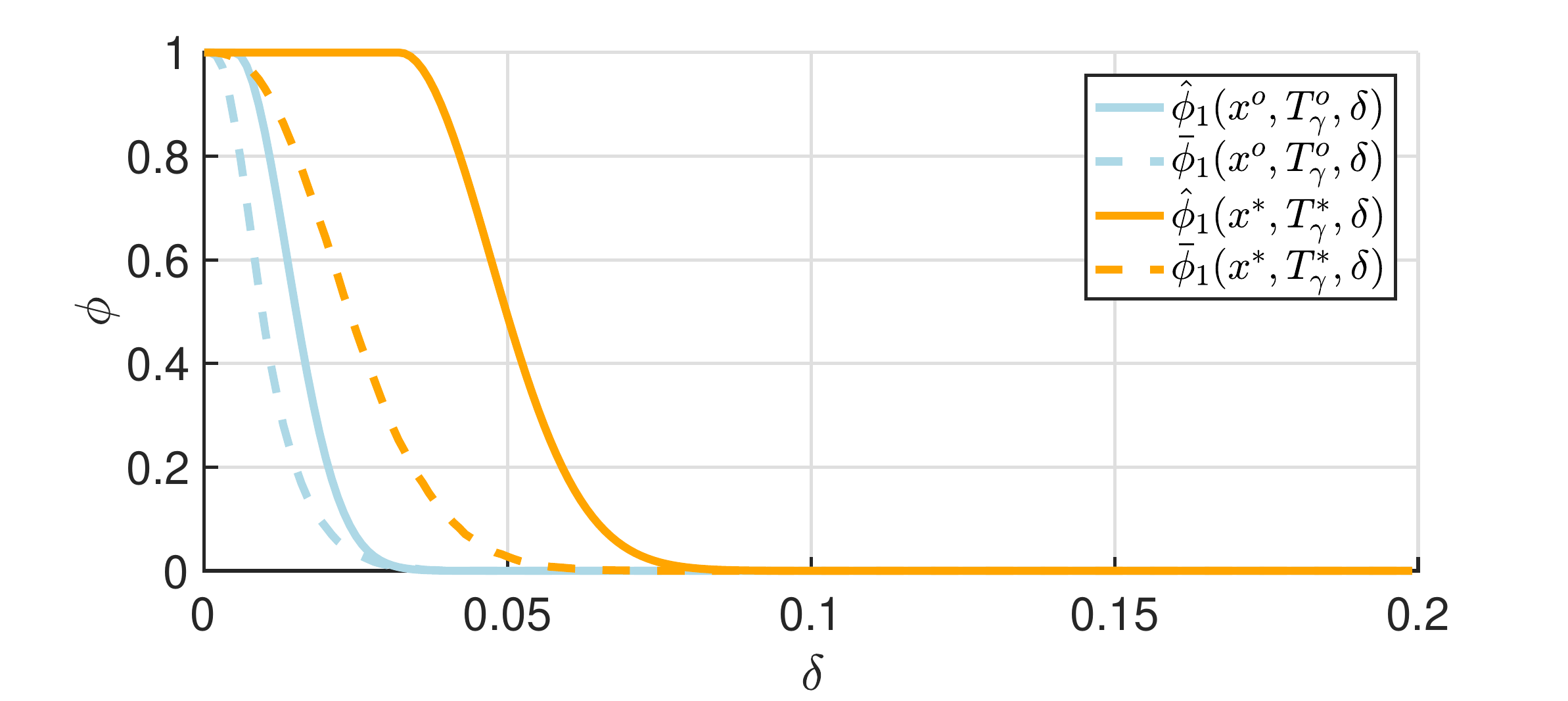}} 
    {\includegraphics[width = 0.98 \columnwidth]{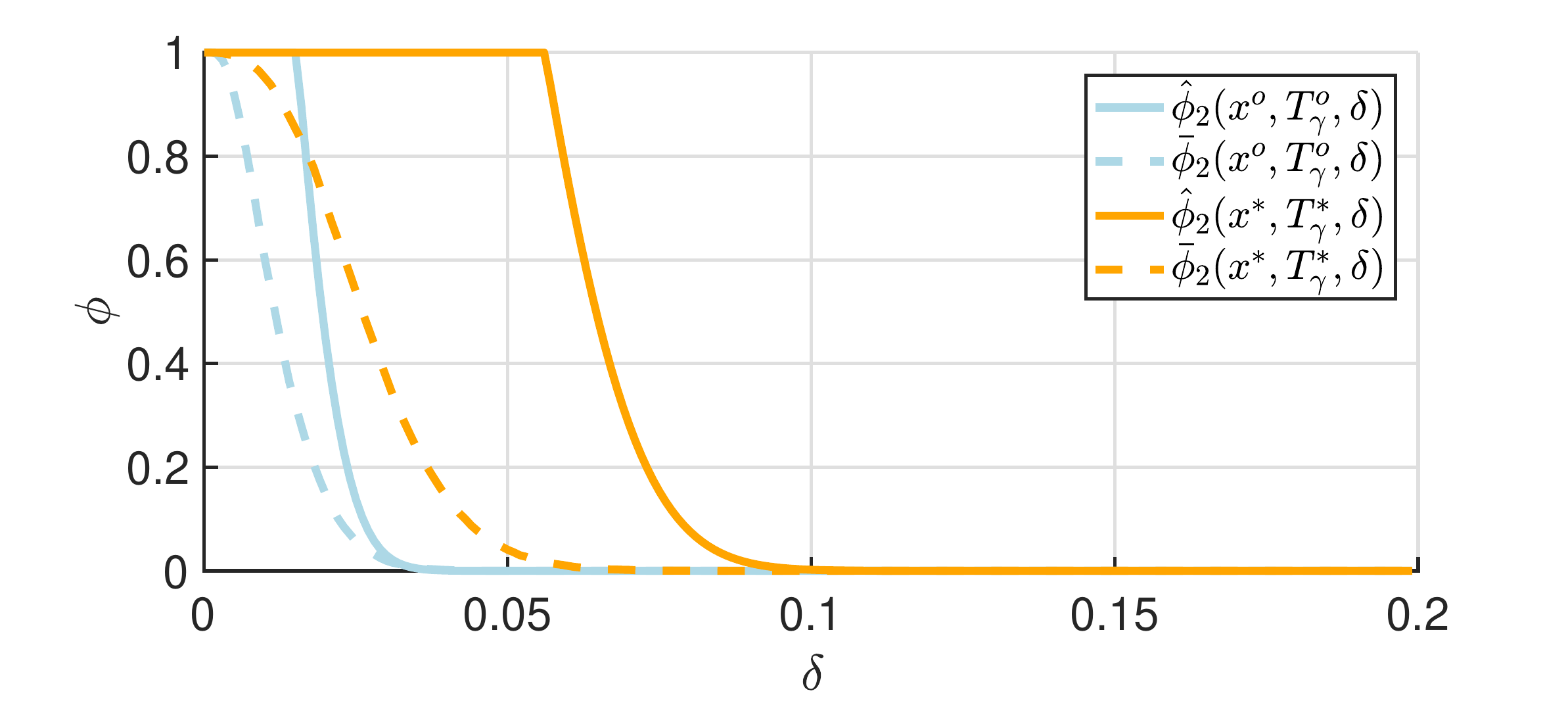}}
    \caption{{Upper bounds (solid lines) and sampling approximation (dashed lines) for $\phi_1$ (left plot) and $\phi_2$ (right plot) on $x^o$ and $x^*$.}}
    \label{fig:phi_all_run_ex}
\end{figure*}

We illustrate the upper bounds for $\phi_1$ and $\phi_2$, as given by Theorem \ref{BoundsSIngleBoundedVariation}  and \ref{Theorem-Invariance}, 
on the GP introduced in Example \ref{ex:run_ex1}. 
Figure \ref{fig:phi_all_run_ex} shows the values obtained for $\hat{\phi}_1$ and $\hat{\phi}_2$ on $x^o$ and $x^*$ for  $\delta$ between $0$ and $0.2$.
We observe that values computed for $x^*$ are consistently greater than those computed for $x^o$, which captures and probabilistically quantifies the increased uncertainty of the GP around $x^*$, as well as the increased ratio of mean variation around it (see Figure \ref{fig:example_mean_var}).
Notice also that values for $\hat{\phi}_1$ are always smaller than the corresponding $\hat{\phi}_2$ values. This is a direct consequence of the fact that probabilistic invariance is a stronger requirement than probabilistic safety, as defined in Problem \ref{BoundedVariationSingleCOmponentDefiniton}, as the latter is not affected by variations that tend to increase the value of the GP output (translating to increased confidence in classification settings).
{In Figure \ref{fig:phi_all_run_ex} we also compare the upper bounds obtained with estimation for $\phi_1$ and $\phi_2$ based on sampling of the GP in a discrete grid around the test points.
Specifically, we sample 10000 functions from the GP and evaluate them in 2025 points uniformly spaced around the test point.
We remark that this provides us with just an empirical under-approximation of the actual values of $\phi_1$ and $\phi_2$, referred to as $\bar{\phi}_1$ and $\bar{\phi}_2$ respectively.
Note also that $\bar{\phi}_1$ and $\bar{\phi}_2$ have an exponential decay. 
The results suggest that the approximation is tighter around $x^o$ than around $x^*$.
In fact, higher variance will generally imply a looser bound, also due to the over-approximations introduced in the computation of the constants required in the theorems.
}
\end{myexam}

\subsection{Constant Computation}  
We introduce a general framework for the computation of the constants involved in the bounds presented in the previous section with an approach based on a generalisation of that of \cite{jones1998efficient} for squared-exponential kernels in the setting of Kriging regression. %
%
%
Namely, we assume the existence of a suitable decomposition of the kernel function as $\Sigma_{x,x_i} = \psi_\Sigma \left( \varphi_\Sigma \left( x,x_i \right)  \right)$ for all $x$ and $x_i \in \mathbb{R}^m$, such that:
\begin{enumerate}
    \item $\varphi_\Sigma  : \mathbb{R}^m \times \mathbb{R}^m \rightarrow \mathbb{R} $ is a continuous function;
    \item $\psi_\Sigma : \mathbb{R} \rightarrow \mathbb{R}$ is differentiable, with $\frac{d \psi_\Sigma }{d  \varphi_\Sigma}$ continuous;
    \item $\sup_{x \in T} \sum_{i = 1}^{\tss} c_i \varphi_\Sigma \left(x, x_i \right) $ can be computed for each $c_i \in \mathbb{R}$ and $x_i \in \mathbb{R}^m$, $i = 1, \ldots, \tss$.
\end{enumerate}
While assumptions 1 and 2 usually follow from smoothness of the kernel used, assumption 3 depends on the particular $\varphi_\Sigma$ defined.
Intuitively, $\varphi_\Sigma$ should represent the smallest building block of the kernel which captures the dependence on the two input points.
For example for the squared exponential kernel this has the form of a separable quadratic polynomial so that assumption 3 is verified.
Similarly, for the ReLU kernel $\varphi_\Sigma$ can be defined as the dot product between the two input points.
{A list of commonly used kernels that satisfy assumptions 1 to 3 is given in the Supplementary Materials, along with valid decomposition functions $\varphi_\Sigma $ and $\psi_\Sigma $.}

Assumptions 1 and 2 guarantee the existence for every $x_i \in \mathbb{R}^m$ of a set of constants  $a^i_L$, $b^i_L$, $a^i_U$ and $b^i_U$ such that:
\begin{equation}\label{eq:lin_approx}
    a_L^i + b_L^i \varphi_\Sigma \left( x, x_i \right) \leq \Sigma_{x,x_i} \leq a_U^i + b_U^i \varphi_\Sigma \left( x, x_i \right) \quad \forall x \in T.
\end{equation}
In fact, 
it follows from those that $\psi_\Sigma$ has a finite number of flex points.
Hence, we can iteratively find lower and upper approximation in convex and concave parts, and merge them together as detailed in the Supplementary Material. 
The key point is that, due to linearity, this upper and lower bound on the kernel can be propagated through the inference equations for Gaussian processes, so as to obtain lower and upper linear bounds on the a-posteriori mean and variance with respect to $\varphi_\Sigma$.
Thanks to assumption 3, these bounds can be solved for optimal points exactly, thus providing formal lower and upper values on optimization over a-posteriori mean and variance  of the Gaussian process in $T$.
The described approach can be used to compute $\xi^{(i)}$, $K^{(i)}_{x^*}$ and $\sup_{x \in T}\mu^{o}(x^*,x)$ . In the following subsection we give details for the  computation of $\sup_{x \in T}\mu^{o}$.
We refer to the Supplementary Materials for the details for the other constants and for squared-exponential and ReLU kernels.

\subsubsection{Mean Computation}
As $x^*$ is fixed, we have that  $\sup_{x \in T}\mu^{o} = \bar{\mu}(x^*) - \inf_{x \in T} \bar{\mu}(x)$, hence we need just to compute $\inf_{x \in T} \bar{\mu}(x)$.
Using Eqn \eqref{eq:lin_approx} we can compute a lower and upper bound to this inferior, which can be refined using standard branch and bound techniques.
Let $t =\Sigma_{\mathcal{D},\mathcal{D}}^{-1}(\mathbf{y}-\mu_{\mathcal{D}})$, then by the inference formula for Gaussian processes and Eqn \eqref{eq:lin_approx} we have that:
\begin{align*}
    \bar{\mu}(x) = \sum_{i = 1}^\tss  t_i \Sigma_{x,x_i} \geq \sum_{i = 1}^\tss t_i \left( a^i + b^i \varphi_\Sigma \left( x, x_i \right) \right) \quad \forall x \in T
\end{align*}
where we choose: 
$
  (a^i , b^i) = \begin{cases}
  (a^i_L , b^i_L), & \text{if $t_i \geq 0$}.\\
  (a^i_U , b^i_U), & \text{otherwise}.
  \end{cases}.
$
Let $\bar{x}$ be an inferior point for $ x \in T$ to the right-hand side Equation (that can be computed thanks to Assumption 3, with $c_1 := t_i b^i $), then, by the definition of inferior we have that:
\begin{align*}
    \bar{\mu}(\bar{x})  \geq \inf_{x \in T} \bar{\mu}(x) \geq \sum_{i = 1}^\tss t_i a_i + \sum_{i = 1}^\tss t_i  b_i \varphi_\Sigma \left( \bar{x}, x_i \right).
\end{align*}
The latter provide bounds on $\inf_{x \in T} \bar{\mu}(x)$ that can be used within a branch and bound algorithm for further refinement. 

\subsection{Computational Complexity}
 Performing inference with GPs has a cost that is $\mathcal{O}(\tss^3),$ where $\tss$ is the size of the dataset. Once inference has been performed the cost of computing upper and lower bounds for $\sup_{x \in T} \mu^{o,(i)}(x^*,x)$ is $\mathcal{O}(\tss C),$ where $C$ is a constant that depends on the particular kernel. For instance, for the squared-exponential kernel $C=1,$ while for the ReLU kernel (Eqn \eqref{KernelSquaredExponential}) $C=L,$ where $L$ is the number of layers of the corresponding neural network. The computation of the bounds for $\xi^{(i)}$ requires  solving a convex {quadratic} problem in $m+\tss + 1$ variables, while $K^{(i)}_{x^*}$ and $\sup_{x_1,x_2 \in T}d_{x^*}^{(i)}(x_1,x_2)$ can be bounded in constant time. Refining the bounds with a branch and bound approach has a worst-case cost that is exponential in $m,$ the dimension of the input space.
Hence,  sparse approximations, which mitigate the cost of performing inference with GP \cite{seeger2003fast}, are appealing.



\section{Experimental Evaluation: Robustness Analysis of Deep Neural Networks}

\begin{figure*}[!h]
\centering
\includegraphics[width = .5\textwidth, clip = on, trim = 0mm 50mm 0mm 0mm]{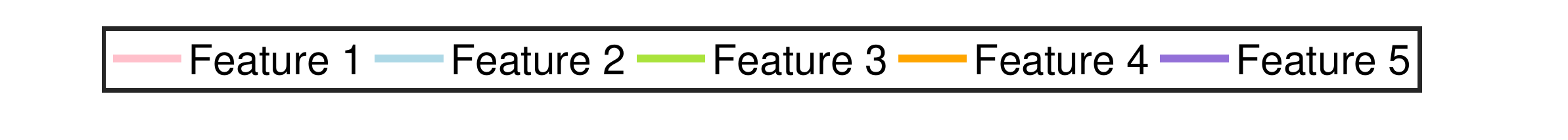}\\
\hspace*{2cm}{\includegraphics[ width = .15\textwidth]{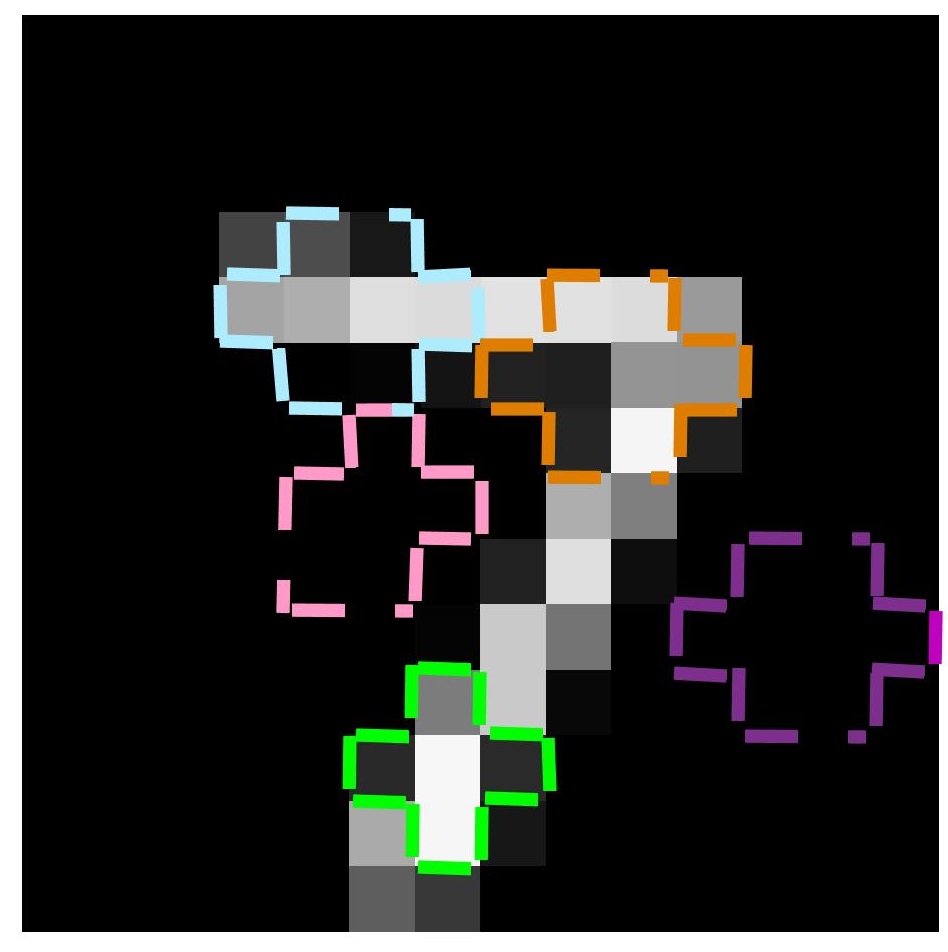}} \hfill
{\includegraphics[width = .15\textwidth]{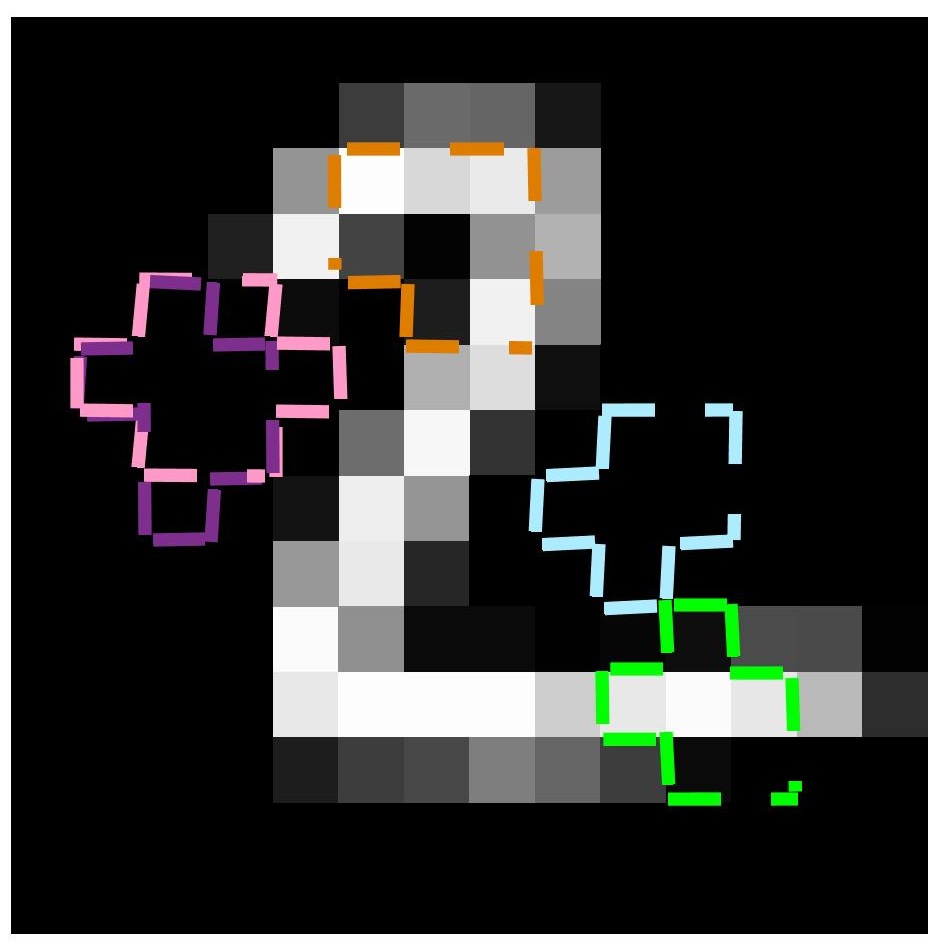}} \hfill
{\includegraphics[width = .15\textwidth]{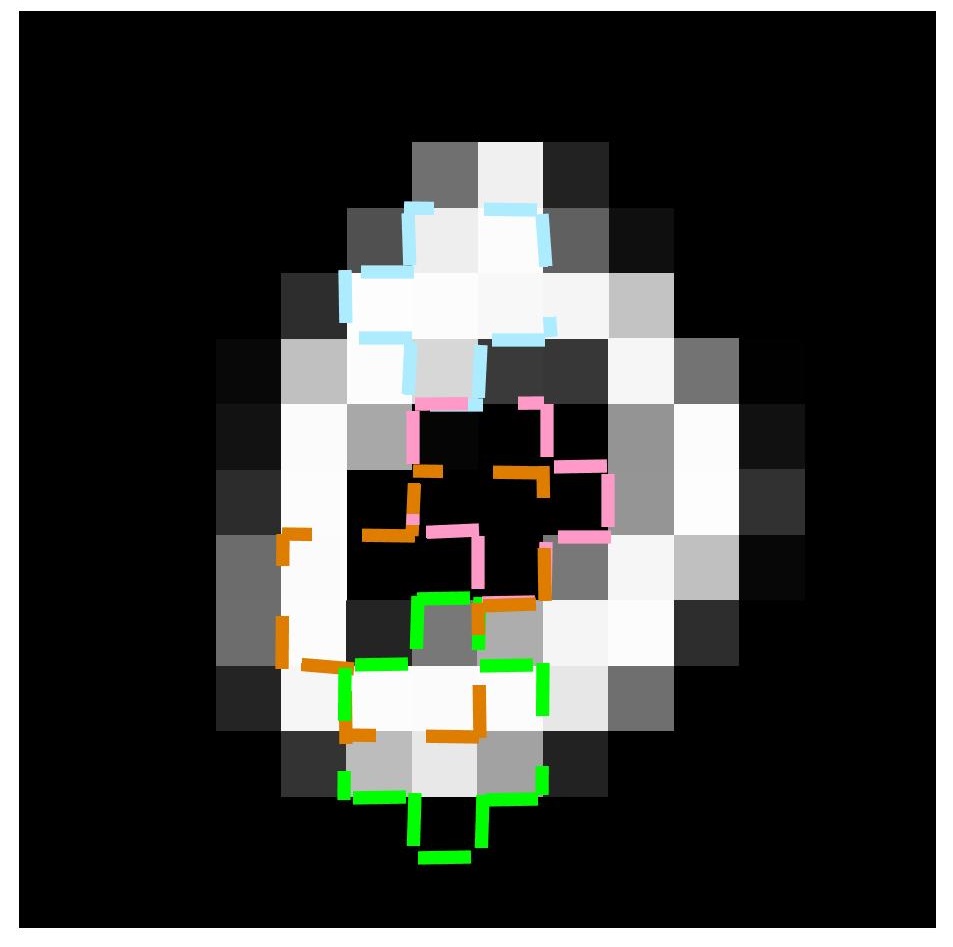}}\hspace*{1.3cm} \\
{\includegraphics[clip = on,trim = 0mm 0mm 15mm 0mm, width = .33\textwidth]{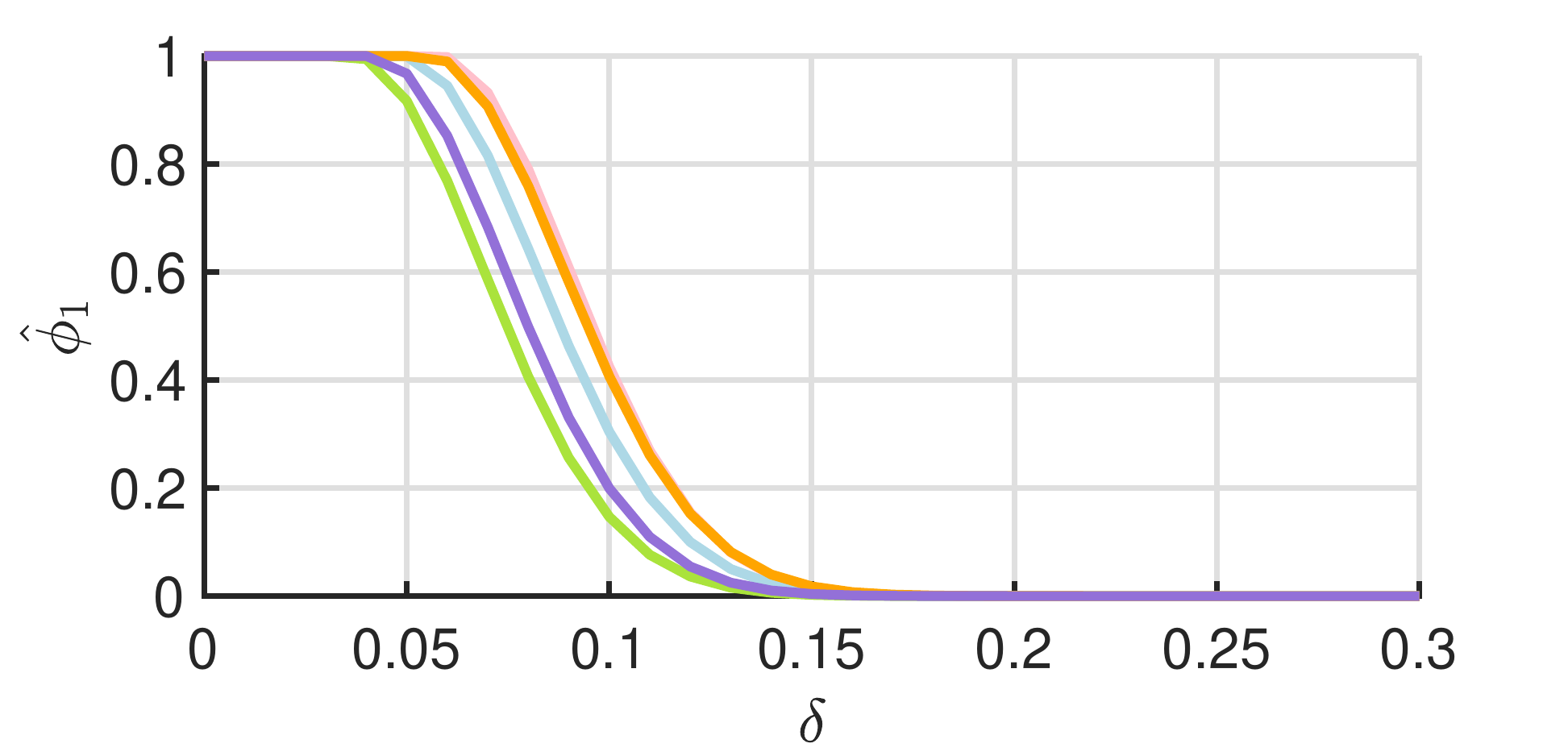}} \hfill
{\includegraphics[clip = on,trim = 0mm 0mm 15mm 0mm,width = .33\textwidth]{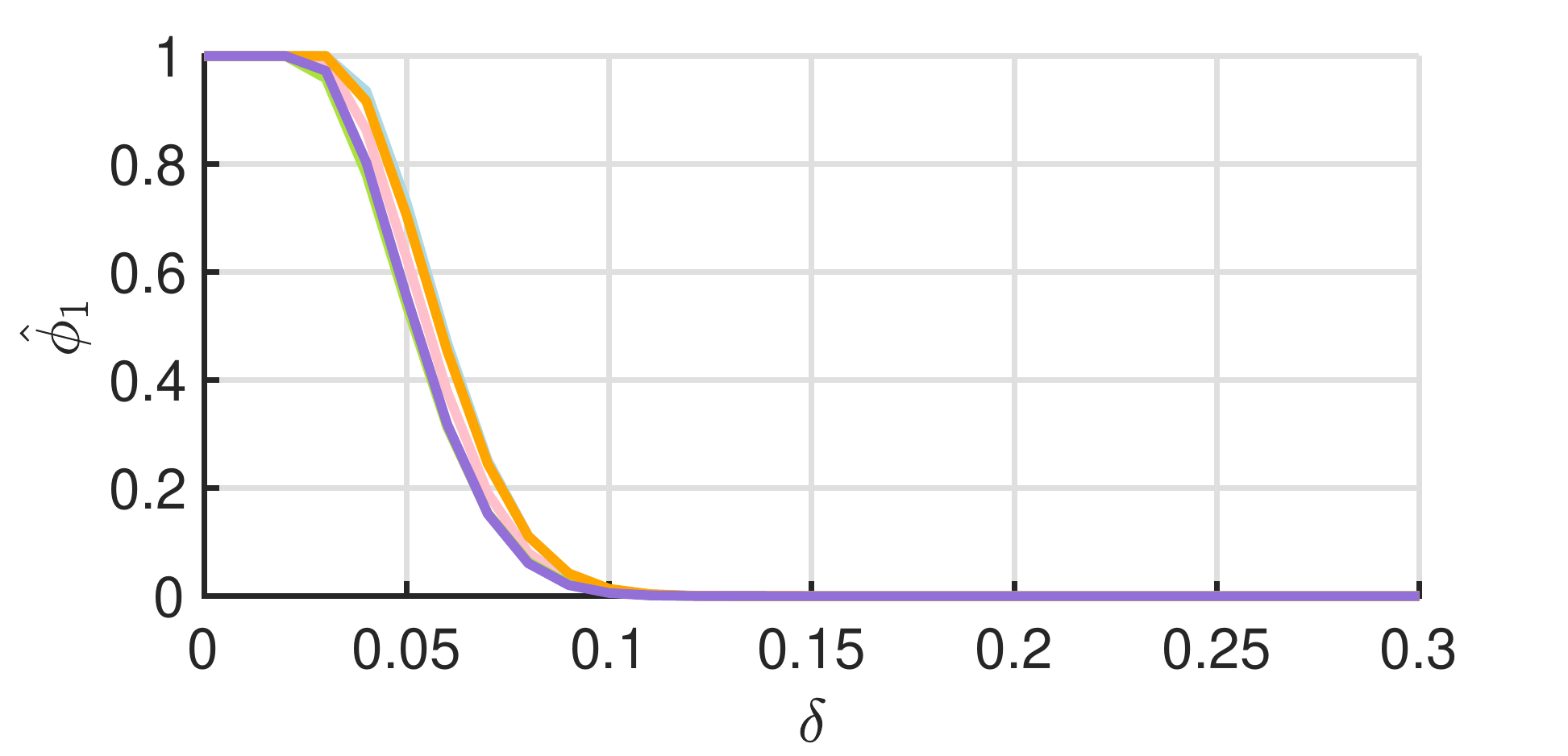}} \hfill
{\includegraphics[clip = on,trim = 0mm 0mm 15mm 0mm,width = .33\textwidth]{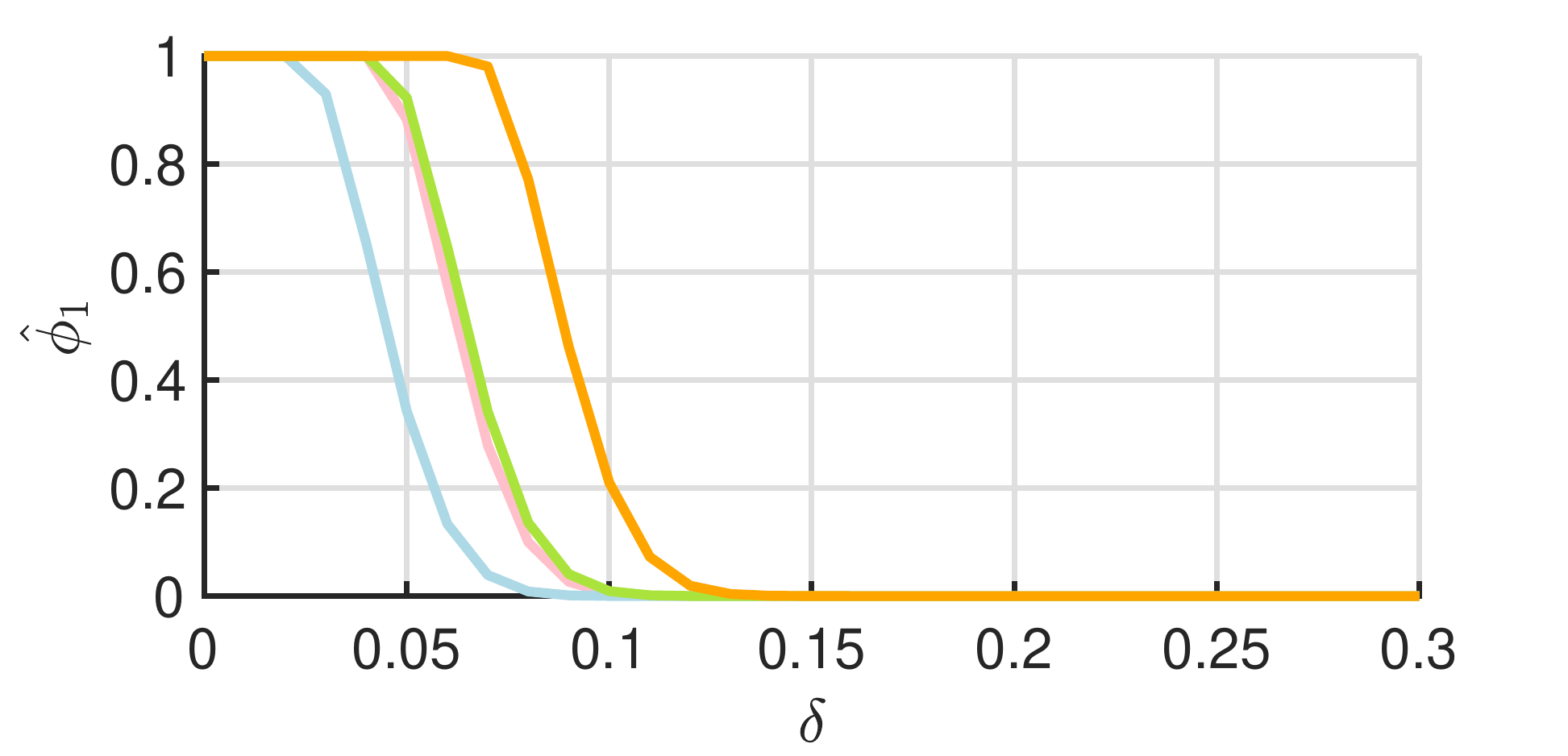}} \\ 
{\includegraphics[clip = on,trim = 0mm 0mm 15mm 0mm,width = .33\textwidth]{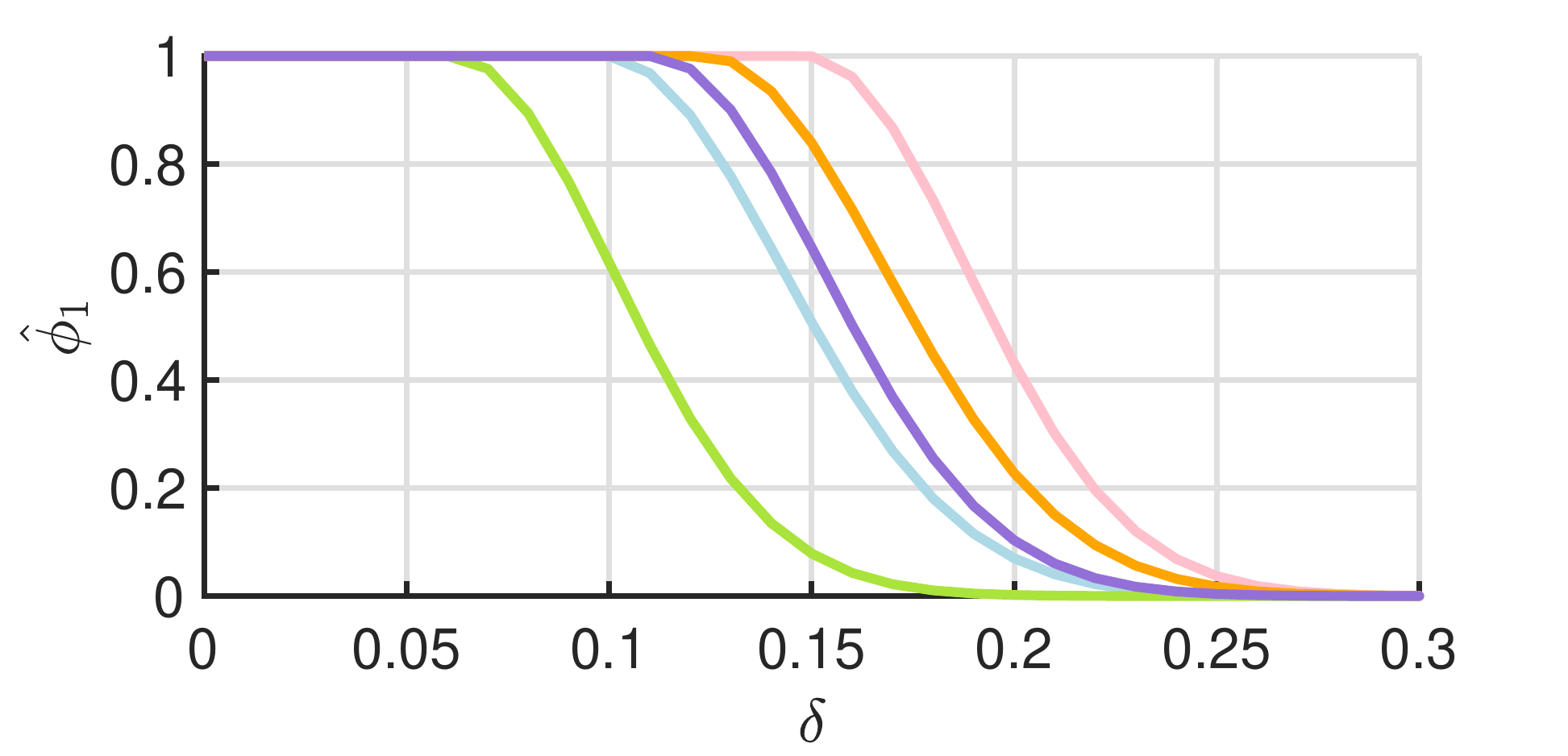}} \hfill
{\includegraphics[clip = on,trim = 0mm 0mm 15mm 0mm,width = .33\textwidth]{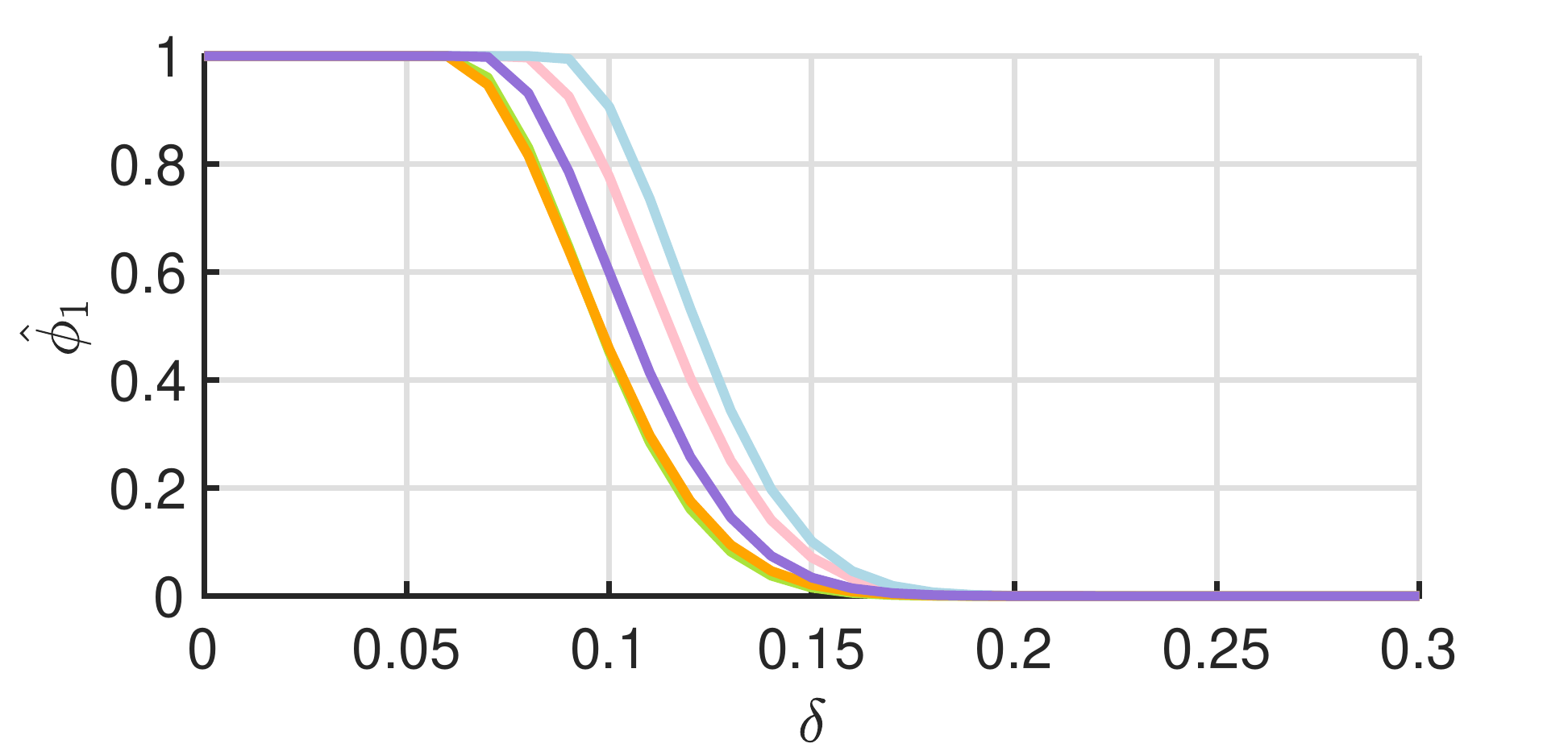}} \hfill
{\includegraphics[clip = on,trim = 0mm 0mm 15mm 0mm,width = .33\textwidth]{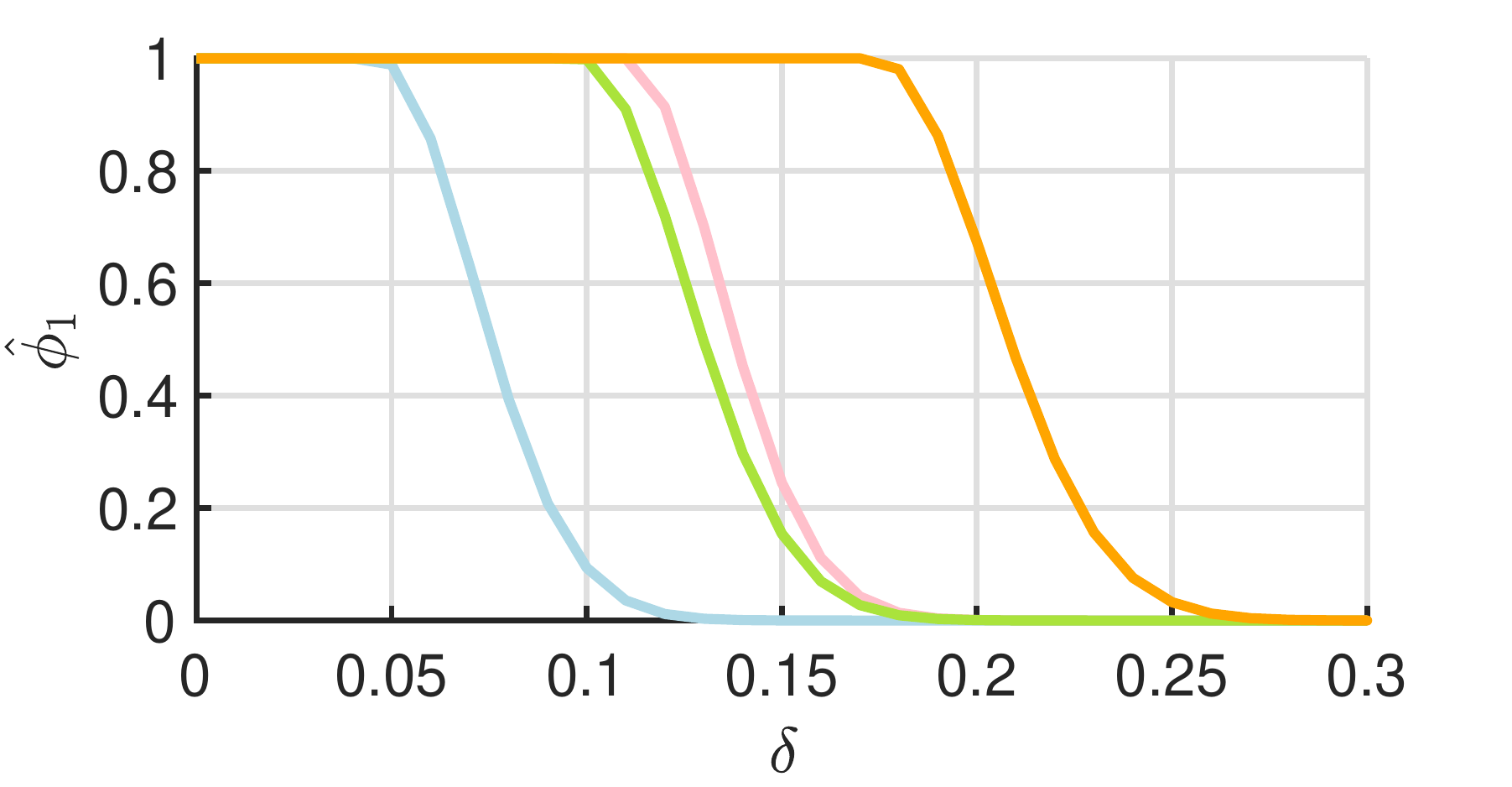}}
\caption{First row: three images randomly selected from the MNIST test set, along with detected SIFT features. Second row: respective $\hat{\phi}_1$ values for $\gamma = 0.05$. Third row: respective $\hat{\phi}_1$ values for $\gamma = 0.15$.
}
\label{fig:feat_analysis}
\end{figure*}

In this section we apply the methods presented above to GP defined with deep kernels, in an effort to provide a probabilistic 
analysis of adversarial examples.
This analysis is exact for GPs, but only approximate for fully-connected NNs, 
by virtue of \textit{weak convergence} of the induced distributions between deep kernel GPs and deep fully-connected NNs.
%
\subsection{Experimental Setting}
We focus on GPs with ReLU kernel, which directly correspond to fully-connected NNs with ReLU activation functions.
Given the number of layers $L$, the regularization parameters $\sigma_w$ (prior variance on the weights) and $\sigma_b$ (prior variance on the bias), the ReLU covariance  $\Sigma^L \left(x_1,x_2\right)$ between two input points is iteratively defined by the set of equations \cite{lee2017deep}:
\begin{align}
    \nonumber
     \Sigma^l(x_1,x_2) &=  \sigma^2_b+\frac{\sigma^2_w}{2 \pi} \sqrt{\Sigma^{l-1}(x_1,x_1)\Sigma^{l-1}(x_2,x_2)}  \nonumber \\
    & \left( \sin \beta^{l-1}_{x_1,x_2} + (\pi - \beta^{l-1}_{x_1,x_2}) \cos \beta^{l-1}_{x_1,x_2} \right) \label{KernelSquaredExponential} \\
     \beta^{l}_{x_1,x_2}&=\cos^{-1}\left( \frac{\Sigma^l(x_1,x_2)}{\sqrt{\Sigma^l(x_1,x_1)\Sigma^l(x_2,x_2)}}  \right) \nonumber
\end{align}
for $l = 1,\ldots,L$, where $\Sigma^0(x_1,x_2) =  \sigma^2_b+\frac{\sigma^2_w}{m} x_1 \cdot x_2$.
%
\paragraph{Training} We follow the experimental setting of \cite{lee2017deep}, that is, we train a selection of ReLU GPs on a subset of the MNIST dataset using least-square classification (i.e. posing a regression problem to solve the classification task) and rely on optimal hyper-parameter values estimated in the latter work.
Note that the methods we presented are not constrained to specific kernels or classification models, and can be generalized by suitable modifications to the constant computation part. 
Classification accuracy obtained on the full MNIST test set varied between $77\%$ (by training only on 100 samples) to $95\%$ (training on 2000 samples).
Unless otherwise stated, we perform analysis on the best model obtained using $1000$ training samples, that is, a two-hidden-layer architecture with $\sigma_w^2 = 3.19$ and $\sigma_b^2 = 0.00$.
\paragraph{Analysis}
For scalability purposes we adopt the idea from 
\cite{wicker2018feature,ruan2018reachability} of performing a feature-level analysis.
Namely, we pre-process each image using SIFT \cite{lowe2004distinctive}.
From its output, we keep salient points and their relative magnitude, which we use to extract relevant patches from each image, in the following referred to as \textit{features}. 
We apply the analysis to thus extracted features. 
Unless otherwise stated, feature numbering follows the descending order of magnitude. 
\begin{figure*}
    \centering
    {\includegraphics[width = .3\textwidth]{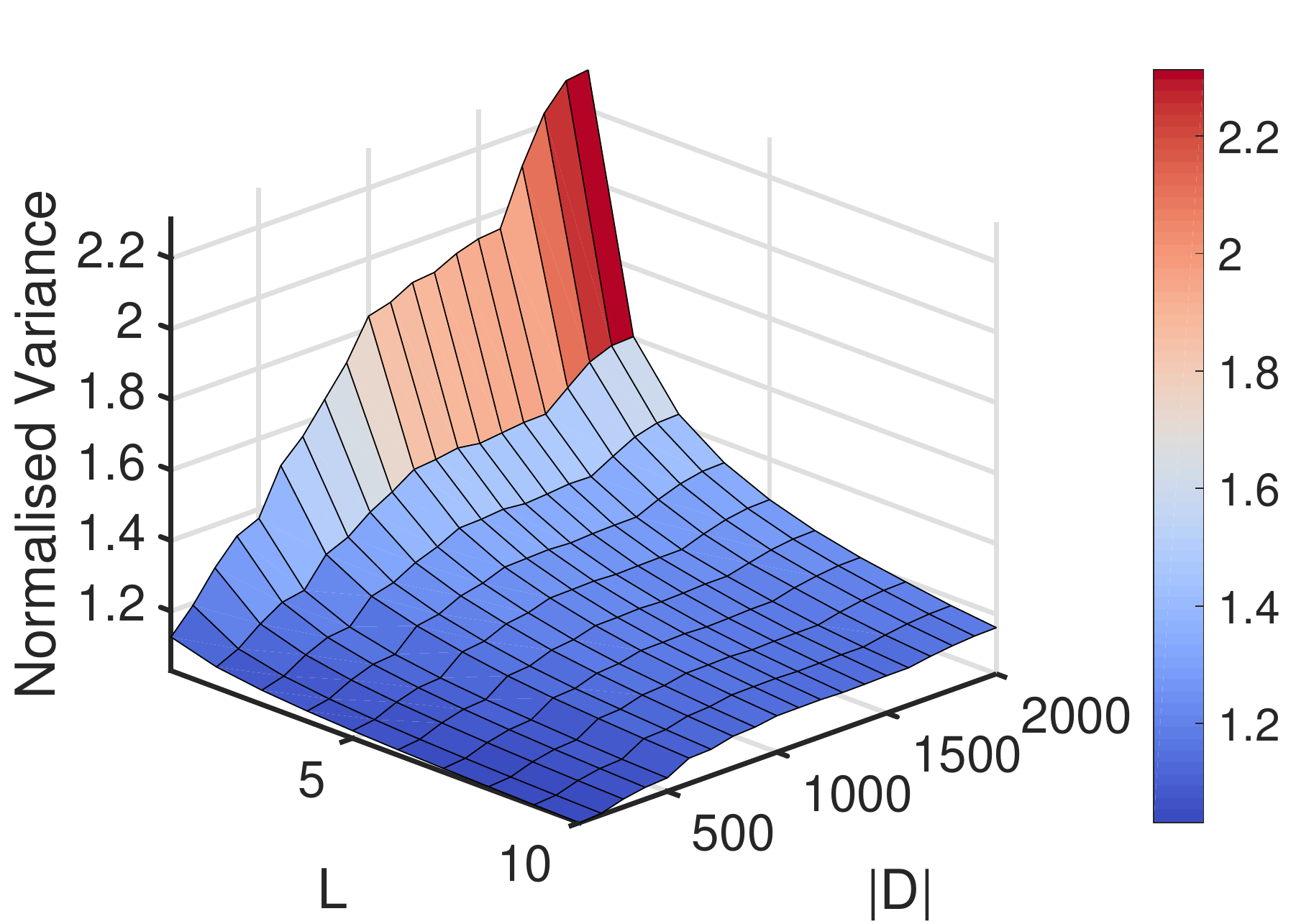}} \hfill
    {\includegraphics[width = .3\textwidth]{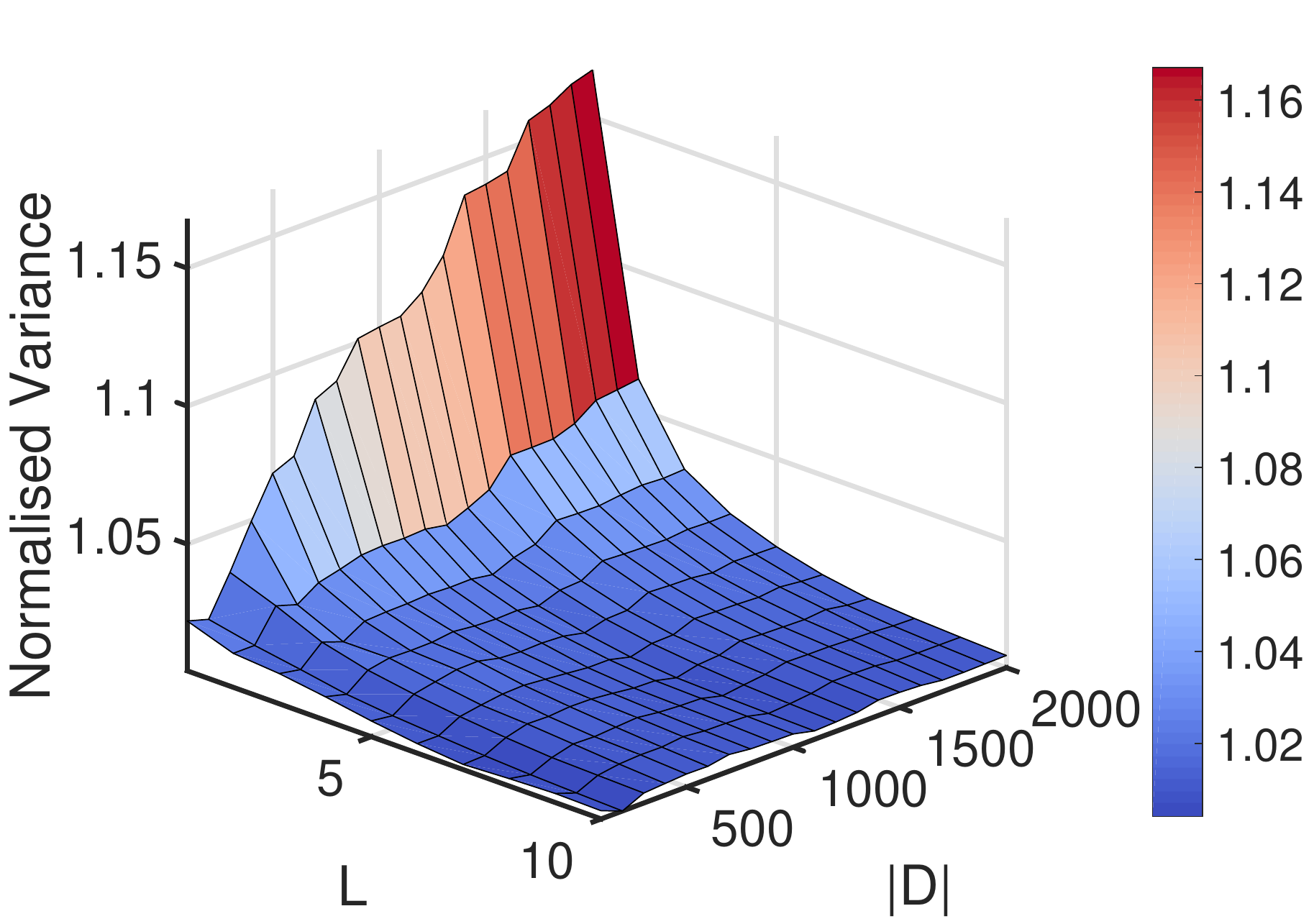}} \hfill
    {\includegraphics[width = .3\textwidth]{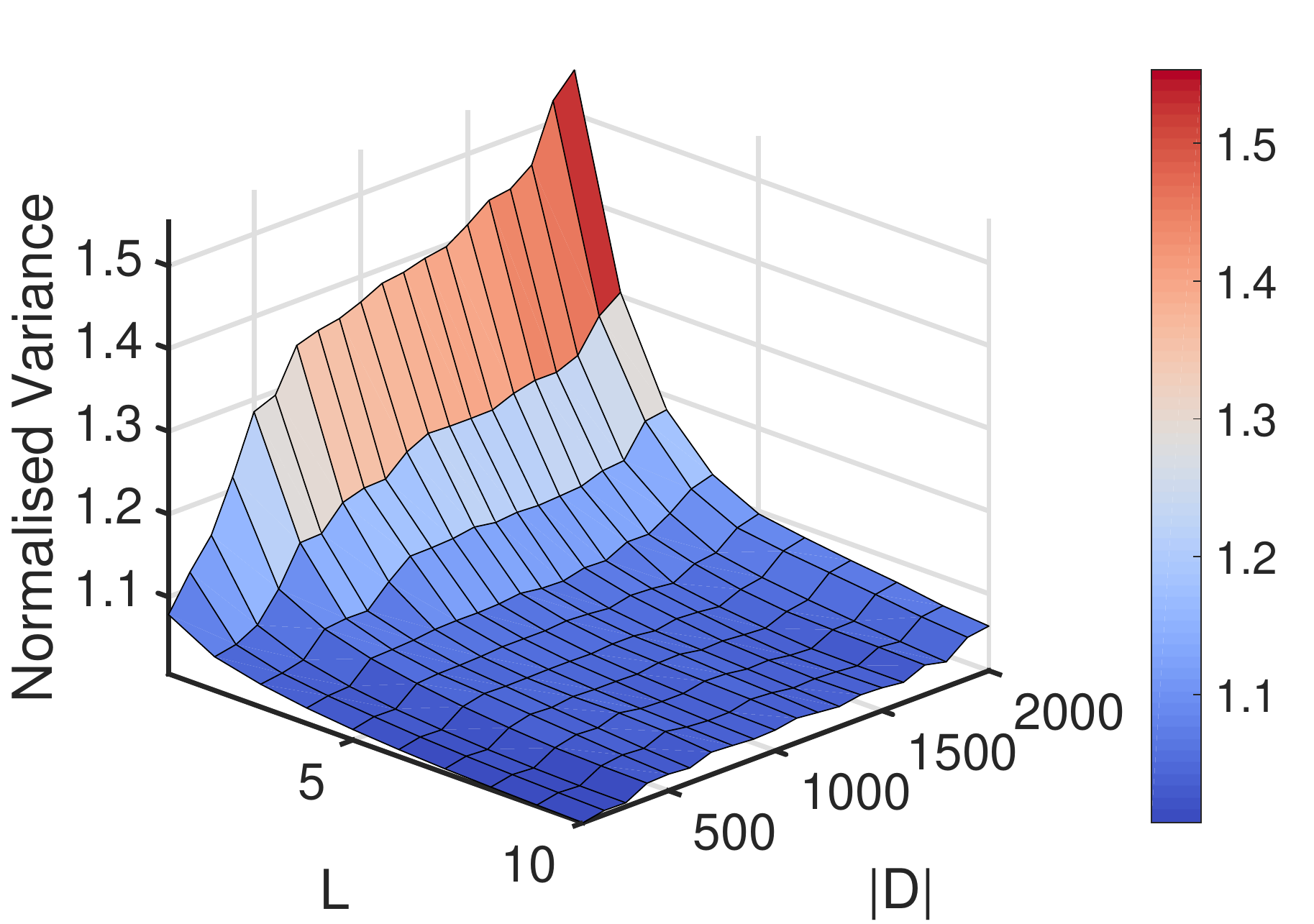}}
    \caption{Normalized variance $\bar{\sigma}^2$ as a function of $L$ (number of layers of the corresponding NN) and $\vert D \vert$ (number of training point).}
    \label{fig:VarianceNN}
\end{figure*}
\subsection{Feature-based Analysis}
In the first row of Figure \ref{fig:feat_analysis} we consider three images from the MNIST test data, and for each we highlight the first five features extracted by SIFT (or less if SIFT detected less than five features).
For each image $x_i$, feature $\mathrm{f}_j$ and $\gamma >0$ we consider the set of images $T^{\mathrm{f}_j,\gamma}_{x_i}$ given by the images differing from $x_i$ in only the pixels included in $\mathrm{f}_j$ and by no more than $\gamma$ for each pixel.

We plot the values obtained for $\hat{\phi}_1$ as a function of $\delta$ for $\gamma = 0.05$ and $\gamma = 0.15$, respectively, on the second and third row of Figure \ref{fig:feat_analysis}.
Recall that $\hat{\phi}_1$ represents an upper-bound on the probability of finding $x \in T^{\mathrm{f}_j,\gamma}_{x_i}$ such that the classification confidence for the correct class in $x$ drops by more than $\delta$ compared to that of $x_i$.
Since a greater $\gamma$ value implies a larger neighborhood $T^{\mathrm{f}_j,\gamma}_{x_i}$, intuitively $\hat{\phi}_1$ will monotonically increase along with the value of $\gamma$.  
Interestingly, the rate of increase is significantly different for different features.
In fact, while most of the 14 features 
analyzed in Figure \ref{fig:feat_analysis} have similar $\hat{\phi}_1$ values for $\gamma = 0.05$, the values computed for 
some of the features using $\gamma = 0.15 $ are almost double (e.g.\ feature 4 for the third image), and remains fairly similar for others (e.g.\ feature 3 for the first image).
Also the relative ordering in robustness for different features is not consistent for different values of $\gamma$ (e.g.\ features 2 and 5 from the first image).
This highlights the need of performing parametric analysis of adversarial attacks, which take into account different strengths and misclassification thresholds, as suggested in \cite{biggio2017wild}.
Finally, notice that, though only 14 features are explored here, the experiment shows no clear relationship between feature magnitude as estimated by SIFT and feature robustness, which calls for caution in adversarial attacks and defences that rely on feature importance.  {Note also that an empirical analysis of the robustness based on sampling, as performed in Figure \ref{fig:phi_all_run_ex}, becomes infeasible for this example as, in order to have good accuracy, a fine grid over a high-dimensional input space  would be required.  }

\subsection{Variance Analysis}
Most active defences are based upon rejecting input samples characterized by high uncertainty values.
After uncertainty is estimated, defences of this type usually proceed by setting a meta-learning problem whose goal is to distinguish between low and high variance input points, so as to flag potential adversarial examples \cite{grosse2017wrong,feinman2017detecting}.
However, mixed results are 
obtained with this approach \cite{carlini2017adversarial}.

In this subsection we aim at analyzing how the variance around test samples changes with different training settings for the three test points previously discussed.
We use the method developed for variance optimisation to compute:
$$
\bar{\sigma}^2 (x^*) = \frac{1}{ \bar{\Sigma}_{x^*,x^*}  } \sup_{x \in T^{\mathrm{f}_1,\gamma}_{x^*}} \bar{\Sigma}_{x,x},
$$
that is, we look for the highest variance point in the $T^{\mathrm{f}_1,\gamma}_{x^*}$ neighbourhood of $x^*$, and normalise its value with respect to the variance at $x^*$.
We use $\gamma = 0.15$ and perform the analysis only on feature 1 of each image.

Figure \ref{fig:VarianceNN} plots values of $\bar{\sigma}^2 (x^*)$ as a function of the number of layers (from 1 to 10) and samples (from 100 to 2000) included in the training set. 
Firstly, notice how maximum values of $\bar{\sigma}^2 (x^*)$ are perfectly aligned with the results of Figure \ref{fig:feat_analysis}. That is, less robust features are associated with higher values of $\bar{\sigma}^2 (x^*)$ (e.g.\ feature 1 for image 1).
This 
highlights
the relationship between the existence of adversarial examples in the neighbourhood of a point and model uncertainty.
We observe the normalised variance value to consistently monotonically increase with respect to the number of training samples used.
This suggests that, as more and more training samples are input into the training model, the latter become more confident in predicting ``natural'' test samples compared to ``artificial'' ones.
Unfortunately, as the number of layers increases, the value of $\bar{\sigma}^2 (x^*)$ decreases rapidly to a plateau.
This seems to point to the fact that defence methods based on a-posteriori variance thresholding become less 
effective with more complex neural network architectures, which could be a justification for the mixed results obtained so far using active defences. 

\section{Conclusion}
In this paper we presented a formal approach for safety analysis of Bayesian inference with Gaussian process priors with respect to adversarial examples and invariance properties. As the properties considered in this paper cannot be computed exactly for general GPs, we compute their safe over-approximations. Our bounds are based on the Borell-TIS inequality and the Dudley entropy integral, which are known to give tight bounds for the study of suprema of Gaussian processes \cite{adler2009random}.
On examples 
of regression tasks for GPs and deep neural networks, we showed how our results allow one to quantify the uncertainty associated to a given prediction, also taking into account of local perturbations of the input space.
Hence, we believe our results represent a step towards the application of Bayesian models in safety-critical applications.

\bibliographystyle{aaai}

\bibliography{main.bib}

\begin{thebibliography}{}

\bibitem[\protect\citeauthoryear{Adler and Taylor}{2009}]{adler2009random}
Adler, R.~J., and Taylor, J.~E.
\newblock 2009.
\newblock {\em Random fields and geometry}.
\newblock Springer Science \& Business Media.

\bibitem[\protect\citeauthoryear{Bach}{2009}]{bach2009exploring}
Bach, F.~R.
\newblock 2009.
\newblock Exploring large feature spaces with hierarchical multiple kernel
  learning.
\newblock In {\em Advances in neural information processing systems},
  105--112.

\bibitem[\protect\citeauthoryear{Bartocci \bgroup et al\mbox.\egroup
  }{2015}]{bartocci2015system}
Bartocci, E.; Bortolussi, L.; Nenzi, L.; and Sanguinetti, G.
\newblock 2015.
\newblock System design of stochastic models using robustness of temporal
  properties.
\newblock {\em Theoretical Computer Science} 587:3--25.

\bibitem[\protect\citeauthoryear{Biggio and Roli}{2017}]{biggio2017wild}
Biggio, B., and Roli, F.
\newblock 2017.
\newblock Wild patterns: {T}en years after the rise of adversarial machine
  learning.
\newblock {\em arXiv preprint arXiv:1712.03141}.

\bibitem[\protect\citeauthoryear{Bortolussi \bgroup et al\mbox.\egroup
  }{2018}]{bortolussi2018central}
Bortolussi, L.; Cardelli, L.; Kwiatkowska, M.; and Laurenti, L.
\newblock 2018.
\newblock Central limit model checking.
\newblock {\em arXiv preprint arXiv:1804.08744}.

\bibitem[\protect\citeauthoryear{Carlini and
  Wagner}{2017}]{carlini2017adversarial}
Carlini, N., and Wagner, D.
\newblock 2017.
\newblock Adversarial examples are not easily detected: {B}ypassing ten
  detection methods.
\newblock In {\em Proceedings of the 10th ACM Workshop on Artificial
  Intelligence and Security},  3--14.
\newblock ACM.

\bibitem[\protect\citeauthoryear{Chouza, Roberts, and
  Zohren}{2018}]{chouza2018gradient}
Chouza, M.; Roberts, S.; and Zohren, S.
\newblock 2018.
\newblock Gradient descent in {G}aussian random fields as a toy model for
  high-dimensional optimisation in deep learning.
\newblock {\em arXiv preprint arXiv:1803.09119}.

\bibitem[\protect\citeauthoryear{Dreossi, Donz{\'e}, and
  Seshia}{2017}]{dreossi2017compositional}
Dreossi, T.; Donz{\'e}, A.; and Seshia, S.~A.
\newblock 2017.
\newblock Compositional falsification of cyber-physical systems with machine
  learning components.
\newblock In {\em NASA Formal Methods Symposium},  357--372.
\newblock Springer.

\bibitem[\protect\citeauthoryear{Dudley}{1967}]{dudley1967sizes}
Dudley, R.~M.
\newblock 1967.
\newblock The sizes of compact subsets of {H}ilbert space and continuity of
  {G}aussian processes.
\newblock {\em Journal of Functional Analysis} 1(3):290--330.

\bibitem[\protect\citeauthoryear{Feinman \bgroup et al\mbox.\egroup
  }{2017}]{feinman2017detecting}
Feinman, R.; Curtin, R.~R.; Shintre, S.; and Gardner, A.~B.
\newblock 2017.
\newblock Detecting adversarial samples from artifacts.
\newblock {\em arXiv preprint arXiv:1703.00410}.

\bibitem[\protect\citeauthoryear{Gal and Ghahramani}{2016}]{gal2016dropout}
Gal, Y., and Ghahramani, Z.
\newblock 2016.
\newblock Dropout as a {B}ayesian approximation: {R}epresenting model
  uncertainty in deep learning.
\newblock In {\em international conference on machine learning},  1050--1059.

\bibitem[\protect\citeauthoryear{Grosse \bgroup et al\mbox.\egroup
  }{2017a}]{grosse2017statistical}
Grosse, K.; Manoharan, P.; Papernot, N.; Backes, M.; and McDaniel, P.
\newblock 2017a.
\newblock On the (statistical) detection of adversarial examples.
\newblock {\em arXiv preprint arXiv:1702.06280}.

\bibitem[\protect\citeauthoryear{Grosse \bgroup et al\mbox.\egroup
  }{2017b}]{grosse2017wrong}
Grosse, K.; Pfaff, D.; Smith, M.~T.; and Backes, M.
\newblock 2017b.
\newblock How wrong am {I}? - {S}tudying adversarial examples and their impact
  on uncertainty in {G}aussian process machine learning models.
\newblock {\em arXiv preprint arXiv:1711.06598}.

\bibitem[\protect\citeauthoryear{Hein and
  Andriushchenko}{2017}]{hein2017formal}
Hein, M., and Andriushchenko, M.
\newblock 2017.
\newblock Formal guarantees on the robustness of a classifier against
  adversarial manipulation.
\newblock In {\em Advances in Neural Information Processing Systems},
  2263--2273.

\bibitem[\protect\citeauthoryear{Huang \bgroup et al\mbox.\egroup
  }{2017}]{huang2017safety}
Huang, X.; Kwiatkowska, M.; Wang, S.; and Wu, M.
\newblock 2017.
\newblock Safety verification of deep neural networks.
\newblock In {\em International Conference on Computer Aided Verification},
  3--29.
\newblock Springer.

\bibitem[\protect\citeauthoryear{Jones, Schonlau, and
  Welch}{1998}]{jones1998efficient}
Jones, D.~R.; Schonlau, M.; and Welch, W.~J.
\newblock 1998.
\newblock Efficient global optimization of expensive black-box functions.
\newblock {\em Journal of Global optimization} 13(4):455--492.

\bibitem[\protect\citeauthoryear{Laurenti \bgroup et al\mbox.\egroup
  }{2017}]{laurenti2017reachability}
Laurenti, L.; Abate, A.; Bortolussi, L.; Cardelli, L.; Ceska, M.; and
  Kwiatkowska, M.
\newblock 2017.
\newblock Reachability computation for switching diffusions: {F}inite
  abstractions with certifiable and tuneable precision.
\newblock In {\em Proceedings of the 20th International Conference on Hybrid
  Systems: Computation and Control},  55--64.
\newblock ACM.

\bibitem[\protect\citeauthoryear{Lee \bgroup et al\mbox.\egroup
  }{2017}]{lee2017deep}
Lee, J.; Bahri, Y.; Novak, R.; Schoenholz, S.~S.; Pennington, J.; and
  Sohl-Dickstein, J.
\newblock 2017.
\newblock Deep neural networks as {G}aussian processes.
\newblock {\em arXiv preprint arXiv:1711.00165}.

\bibitem[\protect\citeauthoryear{Li and Gal}{2017}]{li2017dropout}
Li, Y., and Gal, Y.
\newblock 2017.
\newblock Dropout inference in {B}ayesian neural networks with
  alpha-divergences.
\newblock {\em arXiv preprint arXiv:1703.02914}.

\bibitem[\protect\citeauthoryear{Lowe}{2004}]{lowe2004distinctive}
Lowe, D.~G.
\newblock 2004.
\newblock Distinctive image features from scale-invariant keypoints.
\newblock {\em International journal of computer vision} 60(2):91--110.

\bibitem[\protect\citeauthoryear{Matthews \bgroup et al\mbox.\egroup
  }{2018}]{matthews2018gaussian}
Matthews, A. G. d.~G.; Rowland, M.; Hron, J.; Turner, R.~E.; and Ghahramani, Z.
\newblock 2018.
\newblock Gaussian process behaviour in wide deep neural networks.
\newblock {\em arXiv preprint arXiv:1804.11271}.

\bibitem[\protect\citeauthoryear{Neal}{2012}]{neal2012bayesian}
Neal, R.~M.
\newblock 2012.
\newblock {\em Bayesian learning for neural networks}, volume 118.
\newblock Springer Science \& Business Media.

\bibitem[\protect\citeauthoryear{Raghunathan, Steinhardt, and
  Liang}{2018}]{raghunathan2018certified}
Raghunathan, A.; Steinhardt, J.; and Liang, P.
\newblock 2018.
\newblock Certified defenses against adversarial examples.
\newblock {\em arXiv preprint arXiv:1801.09344}.

\bibitem[\protect\citeauthoryear{Rasmussen}{2004}]{rasmussen2004gaussian}
Rasmussen, C.~E.
\newblock 2004.
\newblock Gaussian processes in machine learning.
\newblock In {\em Advanced lectures on machine learning}. Springer.
\newblock  63--71.

\bibitem[\protect\citeauthoryear{Ruan, Huang, and
  Kwiatkowska}{2018}]{ruan2018reachability}
Ruan, W.; Huang, X.; and Kwiatkowska, M.
\newblock 2018.
\newblock Reachability analysis of deep neural networks with provable
  guarantees.
\newblock {\em arXiv preprint arXiv:1805.02242}.

\bibitem[\protect\citeauthoryear{Sadigh and Kapoor}{2015}]{sadigh2015safe}
Sadigh, D., and Kapoor, A.
\newblock 2015.
\newblock Safe control under uncertainty.
\newblock {\em arXiv preprint arXiv:1510.07313}.

\bibitem[\protect\citeauthoryear{Seeger, Williams, and
  Lawrence}{2003}]{seeger2003fast}
Seeger, M.; Williams, C.; and Lawrence, N.
\newblock 2003.
\newblock Fast forward selection to speed up sparse {G}aussian process
  regression.
\newblock In {\em Artificial Intelligence and Statistics 9}.

\bibitem[\protect\citeauthoryear{Seshia, Sadigh, and
  Sastry}{2016}]{seshia2016towards}
Seshia, S.~A.; Sadigh, D.; and Sastry, S.~S.
\newblock 2016.
\newblock Towards verified artificial intelligence.
\newblock {\em arXiv preprint arXiv:1606.08514}.

\bibitem[\protect\citeauthoryear{Srinivas \bgroup et al\mbox.\egroup
  }{2012}]{srinivas2010gaussian}
Srinivas, N.; Krause, A.; Kakade, S.~M.; and Seeger, M.~W.
\newblock 2012.
\newblock Information-theoretic regret bounds for {G}aussian process
  optimization in the bandit setting.
\newblock {\em IEEE Transactions on Information Theory} 58(5):3250--3265.

\bibitem[\protect\citeauthoryear{Sui \bgroup et al\mbox.\egroup
  }{2015}]{sui2015safe}
Sui, Y.; Gotovos, A.; Burdick, J.; and Krause, A.
\newblock 2015.
\newblock Safe exploration for optimization with {G}aussian processes.
\newblock In {\em International Conference on Machine Learning},  997--1005.

\bibitem[\protect\citeauthoryear{Szegedy \bgroup et al\mbox.\egroup
  }{2013}]{szegedy2013intriguing}
Szegedy, C.; Zaremba, W.; Sutskever, I.; Bruna, J.; Erhan, D.; Goodfellow, I.;
  and Fergus, R.
\newblock 2013.
\newblock Intriguing properties of neural networks.
\newblock {\em arXiv preprint arXiv:1312.6199}.

\bibitem[\protect\citeauthoryear{Wachi \bgroup et al\mbox.\egroup
  }{2018}]{wachi2018safe}
Wachi, A.; Sui, Y.; Yue, Y.; and Ono, M.
\newblock 2018.
\newblock Safe exploration and optimization of constrained {MDP}s using
  {G}aussian processes.
\newblock In {\em AAAI Conference on Artificial Intelligence}.

\bibitem[\protect\citeauthoryear{Wicker, Huang, and
  Kwiatkowska}{2018}]{wicker2018feature}
Wicker, M.; Huang, X.; and Kwiatkowska, M.
\newblock 2018.
\newblock Feature-guided black-box safety testing of deep neural networks.
\newblock In {\em International Conferfence on Tools and Algorithms for the
  Construction and Analysis of Systems},  408--426.
\newblock Springer.

\end{thebibliography}
 \newpage
\appendix

\onecolumn
\section{Supplementary Materials}
In what follows we report the supplementary material of the paper. We first report the proofs of the main results and then  give further details of the algorithmic framework we develop to compute the constants required in Theorem \ref{BoundsSIngleBoundedVariation} and  \ref{Theorem-Invariance}.
Finally, we give details for the case of the squared-exponential kernel and ReLu kernel.

\section*{Proofs}\label{Appendix:Proofs}
     
\paragraph{\textbf{Proof of Theorem \ref{BoundsSIngleBoundedVariation}}}     
     \begin{align*}
         & P( \exists x \in T\, s.t.\, \big( \GP^{(i)}(x^*)-\GP^{(i)}(x)>\delta \, | \, \mathcal{D}\big)\\ & \quad \quad \text{(By definition of supremum)}\\
         =&P\big( \sup_{x \in T}\,  \GP^{o,(i)}(x^*, x)>\delta \big)\\ & \quad \quad \text{(By linearity of GPs)}\\
         =&P\big( \sup_{x \in T}\,\hat \GP^{o,(i)}(x^*, x) + \mathbb{E}[\GP^{o,(i)}(x^*, x)]>\delta \big)\\ & \quad \quad \text{(By definition of supremum)}\\
         \leq &P\big( \sup_{x \in T}\, \hat \GP^{o,(i)}(x^*, x) >\delta- sup_{x_1 \in T}\mathbb{E}[\GP^{o,(i)}( x^*,x_1)] \big).
     \end{align*}
     where $\hat \GP^{o,(i)}(x^*, x)$ is the zero mean Gaussian process with same variance of $\GP^{o,(i)}(x^*, x).$ 
The last inequality can be bound from above using the following inequality, called Borell-TIS inequality \cite{adler2009random}.
     
\begin{theorem}{(Borell-TIS inequality)}\label{BorellInequalityTheorem}
Let $\hat \GP$ a zero-mean unidimensional Gaussian process with covariance matrix $\Sigma$. Assume $E[sup_{x\in T}\hat \GP(x)]<\infty$. Then, for any $u>\mathbb{E}[sup_{x\in T}\hat \GP(x)]$ it holds that
\begin{align}\label{BorellInequalityEquation}
P(sup_{x\in T}\hat \GP(x)>u)\leq e^{\frac{(u-\mathbb{E}[sup_{t\in T}\hat \GP(x)])^2}{2 \sigma_T^2}},
\end{align}
where $\sigma_T^2=sup_{x\in T}\Sigma (x)$.
\end{theorem}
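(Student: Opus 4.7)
The plan is to derive the Borell--TIS inequality by identifying $F := \sup_{x \in T} \hat{\mathbf{z}}(x)$ as a Lipschitz function of a standard Gaussian sequence and then invoking the Gaussian concentration of Lipschitz functions. The high-level steps are (i) reduce the supremum to a finite-dimensional maximum by separability, (ii) write the finite collection of values as a linear image of a standard Gaussian vector, (iii) bound the Lipschitz constant of the maximum by $\sigma_T$, (iv) apply the Gaussian concentration inequality, and (v) pass to the limit.

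First I would use the separability assumption on $\hat{\mathbf{z}}$ (stated in the paper's footnote) to replace $\sup_{x \in T} \hat{\mathbf{z}}(x)$ by a supremum over a countable dense subset $T_0 \subseteq T$, and then approximate $T_0$ by a nested sequence of finite sets $T_N \uparrow T_0$. Since $F_N := \max_{x \in T_N} \hat{\mathbf{z}}(x)$ increases monotonically to $F$, both $\mathbb{E}[F_N] \uparrow \mathbb{E}[F]$ (monotone convergence) and $P(F_N > u) \uparrow P(F > u)$, so it suffices to prove the inequality for each $F_N$ with a constant that does not exceed $\sigma_T$. For a fixed finite $T_N = \{x_1,\ldots,x_N\}$, the vector $(\hat{\mathbf{z}}(x_1),\ldots,\hat{\mathbf{z}}(x_N))$ is centered Gaussian with covariance $[\Sigma(x_i,x_j)]_{ij}$; factor this covariance as $A A^\top$ with $A \in \mathbb{R}^{N \times k}$ and write $(\hat{\mathbf{z}}(x_1),\ldots,\hat{\mathbf{z}}(x_N))^\top = A\xi$ for $\xi \sim \mathcal{N}(0,I_k)$. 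Then $F_N = g(\xi)$ where $g(\xi) := \max_i (A\xi)_i$.

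Next I would bound the Lipschitz constant of $g$: for any $\xi,\xi' \in \mathbb{R}^k$,
\begin{equation*}
|g(\xi) - g(\xi')| \le \max_{i}|(A\xi)_i - (A\xi')_i| \le \max_i \|A_i\|_2\,\|\xi-\xi'\|_2 = \max_i\sqrt{\Sigma(x_i,x_i)}\,\|\xi-\xi'\|_2,
\end{equation*}
so $g$ is Lipschitz with constant at most $\sigma_T = \sup_{x \in T}\sqrt{\Sigma(x,x)}$. At this point I would invoke the Gaussian concentration of Lipschitz functions: for any $L$-Lipschitz $g:\mathbb{R}^k \to \mathbb{R}$ and $\xi \sim \mathcal{N}(0,I_k)$, one has $P(g(\xi) - \mathbb{E}[g(\xi)] > t) \le \exp\bigl(-t^2/(2L^2)\bigr)$ for every $t>0$. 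Setting $t = u - \mathbb{E}[F_N] > 0$ (which is positive for $N$ large, since $\mathbb{E}[F_N] \uparrow \mathbb{E}[F] < u$) gives the claimed bound for $F_N$, and the previous monotone limit yields it for $F$.

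The main obstacle is not the finite-dimensional reduction or the Lipschitz computation, both of which are routine, but the Gaussian concentration of Lipschitz functions itself. The sharp form with variance proxy $L^2$ is a nontrivial consequence of the Gaussian isoperimetric inequality of Borell and Sudakov--Tsirelson (alternatively, of Gross's logarithmic Sobolev inequality combined with the Herbst argument); I would invoke it as a black box, citing the standard reference \cite{adler2009random}. A secondary subtlety is the hypothesis $\mathbb{E}[\sup_{x\in T}\hat{\mathbf{z}}(x)] < \infty$, which is needed both to guarantee that $u - \mathbb{E}[F]$ is meaningful and to justify the monotone convergence of means in the separability reduction; within the paper's pipeline this finiteness is later ensured by the Dudley entropy integral bound used in the proof of Theorem \ref{BoundsSIngleBoundedVariation}.
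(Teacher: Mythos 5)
The paper does not actually prove this statement: it is imported verbatim from \cite{adler2009random} as a classical result and used as a black box inside the proofs of Theorems \ref{BoundsSIngleBoundedVariation} and \ref{Theorem-Invariance}, so your proposal is supplying a proof where the paper offers none. What you give is the standard derivation --- separability reduction to finite index sets, factorization $\Sigma = AA^\top$ so that the finite maximum becomes the Lipschitz function $g(\xi)=\max_i(A\xi)_i$ of a standard Gaussian vector with Lipschitz constant $\max_i\sqrt{\Sigma(x_i,x_i)}\le\sigma_T$, Gaussian concentration for Lipschitz functions, and a monotone passage to the limit --- and each of these steps is correct. Two points deserve to be made explicit. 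First, the concentration step gives $P(F_N>u)\le\exp\bigl(-(u-\mathbb{E}[F_N])^2/(2\sigma_T^2)\bigr)$ with $\mathbb{E}[F_N]$ in the exponent rather than $\mathbb{E}[F]$; since $\mathbb{E}[F_N]\le\mathbb{E}[F]<u$ this only makes the right-hand side larger than the target bound's analogue in the favourable direction, so the limit does go through, but you should state that monotonicity explicitly rather than leaving it implicit in ``the previous monotone limit yields it.'' Second, the entire depth of the theorem sits in the Gaussian Lipschitz concentration inequality, which is essentially the finite-dimensional Borell--TIS statement itself; invoking it as a black box via Borell/Sudakov--Tsirelson isoperimetry is legitimate and is how the textbooks present it, but one should be clear that the argument is a reduction of the infinite-dimensional claim to the finite-dimensional one, not a proof from first principles. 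Finally, note that the exponent in Eqn (\ref{BorellInequalityEquation}) as printed is missing a minus sign; the form your argument produces, $e^{-(u-\mathbb{E}[\sup_{x\in T}\hat\GP(x)])^2/(2\sigma_T^2)}$, is the correct one and is the form actually applied in Theorems \ref{BoundsSIngleBoundedVariation} and \ref{Theorem-Invariance}.
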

     \noindent
     In order to use the Borell-TIS inequality we need to bound from above $\mathbb{E}[sup_{t\in T}\hat \GP(x)]$, the expectation of the supremum of $\hat \GP$. For Gaussian processes we can use the Dudley's entropy integral \cite{adler2009random}, which guarantees that
    \begin{align*}
        \mathbb{E}[\sup_{x\in T}&\, \hat \GP(x)]\leq 12 \int_{0}^{sup_{x_1,x_2 \in T} d(x_1,x_2)} \sqrt{ln(N(d,x,T))} dx,
    \end{align*} 
where $N(d,x,T)$ is the smallest number of balls of radius $x$ according to metric $d$ that completely cover $T$ (see \cite{adler2009random} for further details).
For a hyper-cube $T$ of dimension $D$, in order to compute $N(d,x,T),$ we first need to compute $N(L_2,r,T) $, the number of covering balls of diameter $r$ of $T$ under $L_2$ norm. As the largest hyper-cube contained inside a $m-$sphere of diameter $r$ has a side of length $\frac{r}{\sqrt{m}},$ we obtain 

$$ N(L_2,r,T) \leq \big( 1 + \frac{D \sqrt{m}}{r}\big)^m . $$
 \noindent
Now we know that for $x^* \in T$ 
$$sup_{x_1,x_2 \in T} d^{(i)}_{x^*}(x_2,x_1) \leq K_{\hat x}^{(i)} \vert \vert x_2-x_1 \vert \vert_2,$$
Thus, this implies that 
all the points inside a ball of radius $r=\frac{x}{K_{\hat x}^{(i)}}$ will have a distance in the d metric smaller or equal than $x$.
Thus, the number of covering balls of radius $x$ for $T$, according to pseudo-metric $d$ is upper-bounded by
$$  N(d,x,T) \leq  \big( \frac{\sqrt{m} D K_{\hat x}^{(i)}}{x}  +1 \big)^m. $$

\paragraph{\textbf{Proof of Theorem \ref{Theorem-Invariance}}}

\begin{align*}
         & P( \exists x \in T\, s.t.\, \vert \vert \GP(x^*)-\GP(x) \vert \vert_1>\delta \, | \, \mathcal{D} \big)\\ & \quad \quad \text{(By definition of supremum)}\\
         =&P\big( \sup_{x \in T}\,   \vert \vert \GP^{o}(x^*, x) \vert \vert_1>\delta  \big)\\ & \quad \quad \text{(By definition of $L_1$ norm)}\\
         = &P\big( \sup_{x \in T}\,   \sum_{i=1}^n| \GP^{o,(i)}(x^*, x)|>\delta \big)\\ & \quad \quad \text{(By closure of GPs wrt linear operations)}\\
         \leq &P\big( \sup_{x \in T}\,   \sum_{i=1}^n| \hat\GP^{o,(i)}(x^*, x)|>\delta- sup_{x_1 \in T}\vert \vert\mu^{o}(x_1, x^*) \vert \vert_1 \big)\\ & \quad \quad \text{(By the fact that $\forall i \in \{1,...,n \} | \hat \GP^{(i)}(x)-\hat \GP^{(i)}(x^*)|\geq 0 $)}\\
         \leq &P\big( \vee_{i \in \{1,...,n \}} \sup_{x \in T}\,   | \hat \GP^{o,(i)}(x^*, x)|> \frac{\delta- sup_{x_1 \in T}\vert \vert\mu^{o}(x_1, x^*) \vert \vert_1}{n} \big) \\ & \quad \quad \text{(By the union bound and symmetric properties of Gaussian distributions)}\\
         \leq &2 \sum_{i=1}^n P\big( \sup_{x \in T}\,    \hat \GP^{o,(i)}(x,x^*)>\frac{\delta- sup_{x_1 \in T}\vert \vert\mu^{o}(x_1, x^*) \vert \vert_1}{n} \big)\\.
     \end{align*}
Last term can be bounded by using the Borell-TIS inequality and Dudley's entropy integral, as shown in the proof of  Theorem \ref{BoundsSIngleBoundedVariation}.
 

\section{Constants Computation}\label{Appendix:ConstantComputation}
\subsection{Lower and Upper bound to Kernel Function}
In this subsection we describe a method for computing lower and linear approximation to the kernel function.
Namely, given $x \in T$ and $x^* \in \mathbb{R}^m$, in this we show how to compute $a_L$, $b_L$ such that:
$$
a_L + b_L \varphi_\Sigma \left( x, x^* \right) \leq \Sigma_{x,x^*}  \quad \forall x \in T.
$$
Notice that the same techniques can be used to find $a_U$ and $b_U$ coefficients of an upper-bound, simply by considering $-\Sigma_{x,x^*}$. 
Let $\varphi_\Sigma^L$ and $\varphi_\Sigma^U$ be maximum and minimum values of $\varphi_\Sigma(x,x^*)$ for $x \in T$, and consider the univariate and unidimensional function $\psi_\Sigma (\varphi_\Sigma) : [\varphi_\Sigma^L, \varphi_\Sigma^U ] \rightarrow \mathbb{R}  $.
We can then compute $a_L$ and $b_L$ by using the methods described below.
\paragraph{Case 1}
If $\psi_\Sigma$ happens to be concave function, than by definition of concave function, a lower bound is given by the line that links the points $(\varphi_\Sigma^L,\psi_\Sigma (\varphi_\Sigma^L)  )$ and $(\varphi_\Sigma^U,\psi_\Sigma (\varphi_\Sigma^U)  )$.
\paragraph{Case 2}
If on the other hand $\psi_\Sigma$ happens to be a convex function, than by definition of convex function, a valid lower bound is given by the tangent line in the middle point $(\varphi_\Sigma^L + \varphi_\Sigma^L)/2$ of the interval.
\paragraph{Case 3}
Assume now, that $\psi_\Sigma$ is concave in  $[\varphi_\Sigma^L, \varphi_\Sigma^C ]$, and convex in $ [\varphi_\Sigma^C, \varphi_\Sigma^U ]$, for a certain $\varphi_\Sigma^C \in [\varphi_\Sigma^L, \varphi_\Sigma^U ]$ (the same line of arguments can be used by reversing convexity and concavity). Let $a_L^1$, $b_L^1$ coefficients for linear lower approximation in $[\varphi_\Sigma^L, \varphi_\Sigma^C ]$ and $a_L^2$, $b_L^2$ analogous coefficients in $[\varphi_\Sigma^C, \varphi_\Sigma^U ]$ (respectively computed as for Case 1 and 2), and call $f_1$ and $f_2$ the corresponding functions.
Define $F$ to be the linear function of coefficients $a_L$ and $b_L$ that goes through the two points $(\varphi_\Sigma^L , \min(f_1(\varphi_\Sigma^L),f_2(\varphi_\Sigma^L)  )   )$ and $(\varphi_\Sigma^U, f_2(\varphi_\Sigma^U))$.
We then have that $F$ is a valid linear lower bound in $[\varphi_\Sigma^L, \varphi_\Sigma^U ]$ in fact:
\begin{enumerate}
    \item if $f_1(\varphi_\Sigma^L) = \min(f_1(\varphi_\Sigma^L),f_2(\varphi_\Sigma^L)$: in this case we have that $F(\varphi_\Sigma^L) = f_1(\varphi_\Sigma^L)  \leq f_2(\varphi_\Sigma^L) $, and $F(\varphi_\Sigma^U) = f_2(\varphi_\Sigma^U)$. Hence $F(\varphi_\Sigma) \leq f_2(\varphi_\Sigma)$ in $[\varphi_\Sigma^L,\varphi_\Sigma^U]$, in particular in $[\varphi_\Sigma^C,\varphi_\Sigma^U]$ as well. This also implies that $F(\varphi_\Sigma^C) \leq f_2(\varphi_\Sigma^C) \leq f_1(\varphi_\Sigma^C)$. On the other hand, $F(\varphi_\Sigma^L) = f_1(\varphi_\Sigma^L)$, hence $F(\varphi_\Sigma) \leq f_1(\varphi_\Sigma)$ in $[\varphi_\Sigma^l,\varphi_\Sigma^C]$. Combining these two results and for contrsuction of $f_1$ and $f_2$ we have that $F(\varphi_\Sigma) \leq \psi_\Sigma(\varphi_\Sigma)$ in $[\varphi_\Sigma^L,\varphi_\Sigma^U]$.
    \item if $f_2(\varphi_\Sigma^L) = \min(f_1(\varphi_\Sigma^L),f_2(\varphi_\Sigma^L)$: In this case we have $F = f_2$, we just have to show that $F(\varphi_\Sigma) \leq f_1(\varphi_\Sigma)$ in $[\varphi_\Sigma^L,\varphi_\Sigma^C]$. This immediately follow noticing that $f_2(\varphi_\Sigma^C) \leq f_1(\varphi_\Sigma^C)$ and $f_2(\varphi_\Sigma^L) \leq f_1(\varphi_\Sigma^L)$.
\end{enumerate}

\paragraph{Case 4}
In the general case, assuming to have a finite number of flex points, we can divide the interval $[\varphi_\Sigma^L, \varphi_\Sigma^U ]$ in subintervals in which $\psi_\Sigma$ is either convex or concave.
We can then proceed iteratively from the two left-most intervals by repeatedly applying case 3.

\subsection{Variance}
In this subsection we show how to compute lower and upper bound on $\xi^{(i)} = \sup_{x \in T} \Sigma_{x^*,x}^{o,(i,i)}$.
Though a similar approach to that used for the mean can be used to compute analytic bound on the variance, empirically a convex relaxation of the problem is more efficient.
By definition of $\Sigma_{x^*,x}^{o,(i,i)}$ and applying the GP inference equations, we have that:
\begin{align*}
    \Sigma_{x^*,x}^{o,(i,i)} = \left( \Sigma_{x^*,x^*} + \Sigma_{x,x} - 2 \Sigma_{x,x^*} \right) - ( \Sigma_{x^*,\mathcal{D}} \Sigma^{-1}_{\mathcal{D},\mathcal{D}}   \Sigma_{x^*,\mathcal{D}}^T + \Sigma_{x,\mathcal{D}} \Sigma^{-1}_{\mathcal{D},\mathcal{D}}   \Sigma_{x,\mathcal{D}}^T - 2 \Sigma_{x^*,\mathcal{D}} \Sigma^{-1}_{\mathcal{D},\mathcal{D}}   \Sigma_{x,\mathcal{D}}^T).
\end{align*}
As such, the computation of $\xi$ boils down to the computation of:
\begin{align}\label{eq:prob_xi_inf}
    \inf_{x \in T} \left(\Sigma_{x,\mathcal{D}} \Sigma^{-1}_{\mathcal{D},\mathcal{D}}   \Sigma_{x,\mathcal{D}}^T + 2 \Sigma_{x,x^*} - 2 \Sigma_{x^*,\mathcal{D}} \Sigma^{-1}_{\mathcal{D},\mathcal{D}}   \Sigma_{x,\mathcal{D}}^T \right)
\end{align}
as all the other terms involved in the optimization are constant with respect to $x$.
The approach is based on a quadratic convexification of the above problem, which can hence be solved by standard optimisation methods.
By defining the slack variable vector $r = \Sigma_{x,\mathcal{D}} = \left(\Sigma_{x,x_1}, \Sigma_{x,x_2}, \ldots ,\Sigma_{x,x_{\vert \mathcal{D} \vert}}   \right) $ of covariances between point $x$ and points included in the training set, and $r^* = \Sigma_{x,x^*}$ of covariance between $x$ and the test sample $x^*$, we can rewrite the optimization Problem \ref{eq:prob_xi_inf} as:
\begin{align*}
    &\inf_{x \in T}  \left( r \Sigma^{-1}_{\mathcal{D},\mathcal{D}} r^T + 2 r^* - 2 \Sigma_{x^*,\mathcal{D}} \Sigma^{-1}_{\mathcal{D},\mathcal{D}} r^T \right) \\
    &\textrm{subject to: } \quad r_l = \Sigma_{x,x_l} \quad l = 1,\ldots, \vert \mathcal{D} \vert \\
    & \qquad \qquad \quad \; \, r_* = \Sigma_{x,x_*}.
\end{align*}
Notice that the objective function of the problem is convex with respect to the variable vector $(x,r,r^*)$, since $\Sigma^{-1}_{\mathcal{D},\mathcal{D}}$ is symmetric and positive definite.
Notice that the constraints of the problem are still generally non-convex, but can be over-approximated using the methods presented in \cite{jones1998efficient}.
This lead to the definition of a convex problem on the variable vector $(x,r,r^*)$, that can be solved to compute lower and upper bound on $\xi^{(i)}$.
\subsection{Bounds on $\sup_{x_1,x_2 \in T} d^{(i)}_{x^*} (x_1,x_2)$}
An upper bound to $\sup_{x_1,x_2 \in T} d^{(i)}_{x^*} (x_1,x_2)$ follows directly from the computation of $\xi^{(i)}$.
By the fact that $d^{(i)}_{x^*}$ is a pseudometric, it follows that for every $x_1$ and $x_2$ in $T$:
$$
d^{(i)}_{x^*} (x_1,x_2) \leq d^{(i)}_{x^*} (x^*,x_1) + d^{(i)}_{x^*} (x^*,x_2)
$$
hence:
$$
\sup_{x_1,x_2 \in T} d^{(i)}_{x^*} (x_1,x_2) \leq \sup_{x_1,x_2 \in T} \left( d^{(i)}_{x^*} (x^*,x_1) + d^{(i)}_{x^*}  (x^*,x_2)\right) =  \sup_{x_1 \in T}   d^{(i)}_{x^*} (x^*,x_1) +  \sup_{x_2 \in T}   d^{(i)}_{x^*} (x^*,x_2) =  2 \xi^{(i)}.
$$

\subsection{Bounds on $K^{(i)}_{x^*}$}
In this subsection we describe how to over approximate $K^{(i)}_{x^*}$. 
Recall that for $x^*,x_1,x_2 \in \mathbb{R}^m$ we  work with the pseudo-norm $d^{(i)}_{x^*}(x_1,x_2)$ defined as in Eqn \eqref{Eqn:dNormdefinition} and for $i\in \{1,...,m \}$ need to find a constant $K^{(i)}$ such that
$$ d^{(i)}_{x^*}(x_1,x_2)\leq K^{(i)}_{x^*} ||x_1-x_2||_2.$$
In order to simplify our task we can derive over approximations of $K^{(i)}$ by working only with the priors distributions. In fact, it is easy to show that
 \begin{align*}
    d^{(i)}(x_1,x_2)=&\sqrt{ \Sigma_{x_1,x_1}^{o,(i,i)} + \Sigma_{x_2,x_2}^{o,(i,i)} - 2 \Sigma_{x_1,x_2}^{o,(i,i)}   }\\=
    &\sqrt{ \Sigma_{x_1,x_1}^{(i,i)} + \Sigma_{x_2,x_2}^{{(i,i)}} - 2 \Sigma_{x_1,x_2}^{(i,i)} - (\Sigma_{x_1,\mathcal{D}}^{(i,i),T}\Sigma_{\mathcal{D}}^{-1}\Sigma_{x_1,\mathcal{D}}^{(i,i)}  + (\Sigma_{x_2,\mathcal{D}}^{(i,i)})^T\Sigma_{\mathcal{D}}^{-1}\Sigma_{x_2,\mathcal{D}}^{(i,i)}  - 2 (\Sigma_{x_1,\mathcal{D}}^{(i,i)})^T\Sigma_{\mathcal{D}}^{-1}\Sigma_{x_2,\mathcal{D}}^{(i,i)} )   }\\
    \leq& \sqrt{ \Sigma_{x_1,x_1}^{(i,i)} + \Sigma_{x_2,x_2}^{{(i,i)}} - 2 \Sigma_{x_1,x_2}^{(i,i)}},
\end{align*}
where the last inequality follows from the fact that $\Sigma_{\mathcal{D}}^{-1}$ is symmetric and positive definite. Thus, to get over-approximations of $K^{(i)}_{x^*}$, it is enough to consider consider only the prior distributions of the system. Note also that if $m=1$, then over-approximations can be simply obtained using the mean value theorem.

\section{Squared-Exponential Kernel}\label{Appendix:ExponentialKernel}
In this Section we provide constant computation details for squared-exponential kernel.
\subsection{Definition of $\varphi_\Sigma$ and $\psi_\Sigma$}
According to the squared-exponential kernel we have   $$\Sigma_{x_1,x_2}=\sigma^2 \exp \left(-\sum_{j=1}^m \theta_j (x_1^{(j)} -x_2^{(j)})^2  \right).   $$ 
By defining:
\begin{align*}
    \varphi_\Sigma(x_1,x_2) &=  \sum_{j=1}^m \theta_j (x_1^{(j)} -x_2^{(j)})^2 \\ 
    \psi_\Sigma(\varphi_\Sigma) &= \sigma^2 \exp{ \left( -  \varphi_\Sigma(x_1,x_2)   \right) }
\end{align*}
we have that $\Sigma_{x_1,x_2} = \psi_\Sigma(\varphi_\Sigma \left(x_1,x_2\right)) $, and  $\varphi_\Sigma$ and $\psi_\Sigma$ satisfy the assumptions 1 to 3 stated in the main text (Section Constant Computation).
\subsection{Computation of $K^{(i)}_{x^*}$}
Relying only on the prior,  we have 
\begin{align*}
 \frac{    d^{(i)}(x_1,x_2)}{|x_1-x_2|_2}&\leq \frac{\sqrt{2 \sigma^2(1 - exp(-\sum_{j=1}^m \theta_j (x_1^{(j)} -x_2^{(j)})^2  )) }  }{||x_1 - x_2||_2}.
\end{align*}

Without any lost of generality, we assume $\forall j \in {1,...,m}, \, 0\leq \theta_j \leq 1$
and that $x_1,x_2$ are such that for each $j \in \{1,...,m \}$ $0\leq x_1^{(j)} \leq 1, 0\leq x_2^{(j)} \leq 1. $ Then
.

\begin{align*}
 \frac{   ( d^{(i)}(x_1,x_2))^2}{|x_1-x_2|_2}&  \leq  \frac{{2 \sigma^2(1 - exp(-\sum_{j=1}^m \theta_j (x_1^{(j)} -x_2^{(j)})^2  )) }  } { { \sum_{j=1}^m \theta_j (x_1^{(j)} -x_2^{(j)})^2 }} 
\end{align*}

Now we can introduce the variable $r=\sum_{j=1}^m \theta_j (x_1^{(j)} -x_2^{(j)})^2$ and we obtain
\begin{align*}
 \frac{    (d^{(i)}(x_1,x_2))^2}{||x_1-x_2||_2}\leq& \frac{{2 \sigma^2 (1-exp(-r)) }}{r}
\end{align*}
As everything is positive and the square root is a monotonic function we obtain:
\begin{align*}
   \bar K_{\bar{x}}^{(i)}\leq \sqrt{\sup_{r \in [0, ub ]}   \frac{{2 \sigma^2 (1-exp(-r)) }}{r}}=\sqrt{2 \sigma^2}=\sqrt{2}\sigma
\end{align*}
where $ub = sup_{x_1,x_2 \in T} \sum_{j=1}^m \theta_j (x_1^{(j)} -x_2^{(j)})^2$.

\section{ReLu Kernel}\label{Appendix:ReLu}
In this Section we provide constant computation details for the ReLU kernel.
For simplicity we focus the discussion on a single layer ReLU kernel, and notice that by the recursion of the kernel definition with more than one hidden layer, the results here presented are generalisable to an arbitrary number of layers.
\subsection{Definition of $\varphi_\Sigma$ and $\psi_\Sigma$}
By following the kernel computation procedure outlined by \cite{lee2017deep}, we pre-process each input point to have norm one before inputting it into the GP.
By doing this the one-layer ReLU kernel simplifies to:
\begin{equation*}
    \Sigma_{x_1,x_2} = \sigma_b^2 +    \frac{\sigma_w^2}{2\pi} \left( \sigma_b^2 + \frac{\sigma_w^2}{m} \right)     \left( \sin{ \left( \cos^{-1}{ \frac{\sigma_b^2 + \frac{\sigma_w^2}{m} (x_1 \cdot x_2)  } {\sigma_b^2 + \frac{\sigma_w^2}{m}} } \right)} + \frac{\sigma_b^2 + \frac{\sigma_w^2}{m} (x_1 \cdot x_2)  }{\sigma_b^2 + \frac{\sigma_w^2}{m}}    \left( \pi - \cos^{-1}{\frac{\sigma_b^2 + \frac{\sigma_w^2}{m} (x_1 \cdot x_2) }{\sigma_b^2 + \frac{\sigma_w^2}{m}}} \right)         \right).
\end{equation*}
we define:
\begin{align*}
    \varphi_\Sigma (x_1 , x_2) &= k_1 + k_2 (x_1 \cdot x_2) \\
    \psi_\Sigma(\varphi_\Sigma) &= \sigma_b^2 +    \frac{\sigma_w^2}{2\pi} \left( \sigma_b^2 + \frac{\sigma_w^2}{m} \right)     \left( \sin{ \left( \cos^{-1}{\varphi_\Sigma (x_1 , x_2)} \right)} + \varphi_\Sigma (x_1 , x_2)    \left( \pi - \cos^{-1}{\varphi_\Sigma (x_1 , x_2)} \right)         \right).
\end{align*}
where:
\begin{align*}
k_1 = \frac{\sigma_b^2} {\sigma_b^2 + \frac{\sigma_w^2}{m}} \qquad \qquad \textrm{and} \qquad \qquad
k_2 = \frac{\frac{\sigma_w^2}{m}} {\sigma_b^2 + \frac{\sigma_w^2}{m}}.
\end{align*}
Due to smoothness of trigonometric functions it is easy to see that this decomposition of the kernel satisfy assumptions 1--2 stated in the main text (Section Constants Computation).
We also have that, thanks to the linearity of the dot product, for every $c_i$ and $x_i$
$$
\sum_{i = 1}^\tss c_i \varphi_\Sigma (x, x_i) =  k_1 \left( \sum_{i = 1}^\tss c_i - 1 \right) +  \varphi_\Sigma \left( x , \sum_{i = 1}^\tss c_i x_i  \right).
$$
Hence the computation of the superior defined in Assumption 3, boils down to the trivial computation of the maximum of a dot product.
\subsection{Computation of $K^{(i)}_{x^*}$}
By taking into consideration only the priors, and by paramterising the kernel using the $\alpha = \cos^{-1} x_1 \cdot x_2$, we obtain a 1-dimensional form for $d^{(i)}(x_1,x_2))^2$ from which we can directly compute an overapproximation of $K^{(i)}_{x^*}$, as outlined in the previous sections
\section{{Kernel Functions Decomposition}}\label{Appendix:decomposition}
{We provide decomposition of commonly used kernel functions that satisfy Assumptions 1,2 and 3 stated in the main text.
\\
\textbf{Rational Quadratic Kernel} defined as:
\begin{equation*}
    \Sigma_{x_1,x_2} = \sigma^2 \left( 1 + \frac{1}{2} \sum_{j = 1}^{m} \theta_j \left( x_1^{(j)} - x_2^{(j)} \right)^2   \right)^{-\alpha}
\end{equation*}
with hyper-parameters $\sigma$, $\alpha$ and $\theta_j$, for $j=1,\ldots,m$.
\\
\textbf{Linear Kernel} defined as:
\begin{equation*}
    \Sigma_{x_1,x_2} =  \sigma^2 \sum_{j = 1}^{m}(x_1^{(j)}-\theta_j)(x_2^{(j)}-\theta_j)
\end{equation*}
with hyper-parameters $\sigma$ and $\theta_j$, for $j=1,\ldots,m$.
\\
\textbf{Periodic Kernel} defined as:
\begin{equation*}
    \Sigma_{x_1,x_2} =  \sigma^2 \exp{\left( - \frac{1}{2} \sum_{j = 1}^{m}\theta_j \sin \left( p_j(x_1^{(j)}-x_2^{(j)}) \right) ^2\right)}
\end{equation*}
with hyper-parameters $\sigma$, $\theta_j$ and $p_j$ for $j=1,\ldots,m$.
\\
\textbf{Mat\'ern Kernel} for half-integers values, defined as:
\begin{equation*}
    \Sigma_{x_1,x_2} = \sigma^2 k_{p} \exp{\left(-\sqrt{\hat{k}_p\sum_{j = 1}^{m} \theta_j (x_1^{(j)} - x_2^{(j)}) }\right)} \sum_{l = 0}^{p}k_{l,p} \sqrt{  \hat{k}_p\sum_{j = 1}^{m} \theta_j (x_1^{(j)} - x_2^{(j)}) }^{p-l}
\end{equation*}
with hyper-parameters $\sigma$, $\theta_j$, for $j=1,\ldots,m$, and (integer valued) $p$; while $k_{p}$, $\hat{k}_p$ and $k_{l,p}$ are constants.

Table \ref{tab:decomposition} shows decompositions for the kernels listed above that satisfy Assumptions 1,2 and 3.
Specifically, for the periodic kernel Assumption 3 is not strictly satisfied as it is equivalent to the computation of: 
\begin{equation*}
    \sup_{x \in T} \sum_{i = 1}^\tss c_i \sum_{j = 1}^{m}\theta_j \sin(p_j(x^{(j)}-x_i^{(j)}))^2.
\end{equation*}
Each summand separately can be trivially optimized; summing together the individual optima provides a sound over-approximation of the $\sup$.
As such, the decomposition will provide formal lower and upper bounds that can be used for branch and bound, though in general those will be looser requiring an increased number of iterations in practice.
}
\begin{table}[h]
    \centering
    { 
    \begin{tabular}{l||c|c}
         Kernel              & $\psi_\Sigma (\varphi_\Sigma)$                                                  &  $\varphi_\Sigma (x_1,x_2)$                        \\ \hline \hline
         Squared Exponential &  $\sigma^2 \exp{ \left( -  \varphi_\Sigma(x_1,x_2)   \right) }$               &   $\sum_{j=1}^m \theta_j (x_1^{(j)} -x_2^{(j)})^2$  \\ \hline
         \multirow{2}{*}{ReLU}                &   $\sigma_b^2 +    \frac{\sigma_w^2}{2\pi} \left( \sigma_b^2 + \frac{\sigma_w^2}{m} \right)     \big( \sin{ \left( \cos^{-1}{\varphi_\Sigma (x_1 , x_2)} \right)} +     $ &  \multirow{2}{*}{$ k_1 + k_2 (x_1 \cdot x_2) $}       \\ 
         & $ + \varphi_\Sigma (x_1 , x_2)    \left( \pi - \cos^{-1}{\varphi_\Sigma (x_1 , x_2)} \right)         \big)$ & \\ \hline
         Rational Quadratic  & $\sigma^2(1+\frac{\varphi_\Sigma}{2})^{-\alpha} $                               & $\sum_{j = 1}^{m} \theta_j(x_1^{(j)}-x_2^{(j)})^2$ \\ \hline
         Linear              & $\sigma^2\varphi_\Sigma$                                                        &  $\sum_{j = 1}^{m}(x_1^{(j)}-\theta_j)(x_2^{(j)}-\theta_j)$ \\ \hline 
         Periodic            & $\sigma^2 \exp(-0.5\varphi_\Sigma)$                                             & $\sum_{j = 1}^{m}\theta_j \sin(p_j(x_1^{(j)}-x_2^{(j)}))^2$ \\ \hline
         Mat\'ern  &  $\sigma^2 k_{p} \exp{(-\sqrt{\varphi_\Sigma})}\sum_{l = 0}^{p}k_{l,p} \sqrt{\varphi_\Sigma^{p-l}}$  &        $\hat{k}_p\sum_{j = 1}^{m} \theta_j (x_1^{(j)} - x_2^{(j)})$ \\ \hline \hline
    \end{tabular}}
    \caption{{Kernels decomposition that satisfy the three assumptions stated in the main text (Section Constant Computation).
    Decomposition for  Mat\'ern kernel is given only for half-integer values. Decomposition for the ReLU kernel is given in the case of one-hidden layer, generalisation to an arbitrary number of layers can be obtained by recursive application of the formulas.}}
    \label{tab:decomposition}
\end{table}



\end{document}